\def\BIBand{and}%
\theoremstyle{plain}
\newtheorem{theorem}{Theorem}
\newtheorem{lemma}[theorem]{Lemma}
\newtheorem{proposition}[theorem]{Proposition}
\newtheorem{corollary}[theorem]{Corollary}
\theoremstyle{definition}
\newtheorem*{definition}{Definition}
\theoremstyle{remark}
\newcommand{\eps}{\varepsilon}
\newcommand{\E}{\mathbf{E}}
\renewcommand{\P}{\mathbf{P}}
\newcommand{\p}[1]{\P\left\{{#1}\right\}}
\newcommand{\muhat}{\widehat{\mu}}
\DeclareMathOperator{\kl}{kl}
\renewcommand{\epsilon}{\varepsilon}
\newcommand{\mubar}{\overline{\mu}}
\DeclareMathOperator*{\argmax}{arg\,max}
\def\EMAIL#1{\href{mailto:#1}{#1}}
\author{G\'abor Lugosi\thanks{Department of Economics and Business,
		Pompeu  Fabra University, Barcelona, Spain}
	 \thanks{ICREA, Pg. Llu\'is Companys 23, 08010 Barcelona, Spain}
	 \thanks{Barcelona Graduate School of Economics,
		\EMAIL{gabor.lugosi@gmail.com}}
	\and Abbas Mehrabian\thanks{McGill University,
		\EMAIL{abbas.mehrabian@gmail.com}}}
\title{Multiplayer bandits \\ without observing collision information\thanks{To appear in \textit{Mathematics of Operations Research}}}
\begin{document}
	
	\maketitle
	
	\begin{abstract}
		We study multiplayer stochastic multi-armed bandit problems
in which the players cannot communicate
and if two or more players pull the same arm, a collision occurs and the involved players receive zero reward.
We consider two feedback models:
a model in which the players can observe whether a collision has
occurred and a more difficult setup when no collision information is available.
We give the first theoretical guarantees for the second model:
an algorithm with a logarithmic regret
and an algorithm with a square-root regret that does not depend on the gaps between the means.
For the first model, we give the first square-root regret bounds that do not depend on the gaps.
Building on these ideas, we also give an algorithm for reaching  approximate Nash equilibria quickly in stochastic anti-coordination games.
	\end{abstract}
	
	Keywords: multiplayer bandits; distributed learning;  sequential decision
	making; decentralized algorithms; anti-coordination games; opportunistic spectrum access
	
	MSC2020 subject classification: Primary: 68Q32; Secondary: 62L12, 68W15, 91A15.

%%%%%%%%%%%%%%%%%%%%%%%%%%%%%%%%%%%%%%%%%%%%%%%%%%%%%%%%%%%%%%%%%%%%%%
\section{Introduction.}
The stochastic multi-armed bandit problem is a well-studied problem of machine learning.
Consider an agent that has to choose among several actions in each round of a game.
To each action $i$ is associated a real-valued parameter $\mu_i$.
Whenever the player performs the $i$th action,
she receives a random reward with mean $\mu_i$.
If the player knew the means associated to the actions before starting
the game, 
she would play an action with the highest mean during all rounds.
The problem is to design a strategy for the player to maximize her reward in the setting where she does not know the means.
The \emph{regret} of the strategy is the difference between the accumulated rewards in the two scenarios.

This problem encapsulates the well-known exploration/exploitation trade-off: 
the player never learns the means exactly, but she can estimate them.
As the game proceeds, she learns that some  actions {probably} have better means, so she can exploit these actions to obtain a better reward, but at the same time she has to explore other actions as well, since they {might} have higher means.
Traditionally, actions are called ``arms'' 
and ``pulling an arm'' refers to performing an action.
See~\citet*{Slivkins_survey,torcsababook} for recent monographs on stochastic multi-armed bandits.

We study a multiplayer version of this game, in which each player
pulls an arm in each round, and if two or more players pull the same 
arm, a \emph{collision} occurs and all players pulling that arm receive zero reward.
The players' goal is to maximize the collective received reward.

One application for this model is opportunistic spectrum access with multiple users in a cognitive radio network: we have a radio network with several channels (corresponding to the arms) that have been purchased by primary users. There are also secondary users (the players) that can try to use these channels during the rounds when the primary users are not transmitting.
Successfully using a channel to transmit a message means a unit reward, and not transmitting means zero reward.
If more than one secondary users try to use the same channel in the
same round, a collision occurs and none of them can transmit.
If a unique secondary user tries to use a channel, she will succeed if the primary user owning that channel happens to be idle in that round, which happens with a certain probability. Thus, the reward of the secondary user is a Bernoulli random variable whose mean depends on the activity of the corresponding primary user and whether other secondary users have tried to use the same channel.
See~\citet*[Section I.D]{liuzhao} for other applications.

One may consider (at least) two possible feedback models.
In the first model, whenever a player pulls an arm, she observes whether a collision has occurred on that arm and receives a reward.
In the second model, the player just receives a reward without observing whether a collision has occurred. Of course, if the reward is positive, she can infer that no collision has occurred.
But if the reward is zero, she cannot infer if a collision has occurred.

Our main contributions are as follows.

\begin{enumerate}
	\item
	We offer the first theoretical guarantees for the second model, where the players do not observe collision information.
	We propose an algorithm with a logarithmic regret (in terms of the number of rounds),
	and we also give an algorithm with a sublinear regret that does not depend on the gaps between the means.
	
	\item
	For the first model, in which the players observe collision information, we prove the first sublinear regret bound that does not depend on the gaps between the means.
	
	\item
	One may also view this setup as a stochastic anti-coordination game.
	Using the algorithmic ideas introduced here, we give an algorithm for reaching an approximate Nash equilibrium quickly in such games.
\end{enumerate}

\subsection{Models and results.}
Let $K>1$ be a positive integer and let $\mu_1,\dots,\mu_K$ be nonnegative numbers corresponding to the arm means.
Let
$Y_{i,t}$ be the reward of arm $i$ in round $t$, so the $\{Y_{i,t}\}_{t=1}^{\infty}$
are independent and identically distributed (i.i.d.) and $\E Y_{i,t} = \mu_i$.
We may assume, by relabeling the arms if necessary, that $\mu_1\geq \dots \geq \mu_K$.
The players are of course unaware of this labeling.

For a positive integer $n$, we denote $[n]\coloneqq \{1,\dots,n\}$.
A set of $m>1$ players play the following game for $T>0$ rounds: in
each round $t=1,\dots,T$, player $j$
chooses an arm $A_j(t)\in [K]$.
Let $C_i(t)\in\{0,1\}$ be the collision indicator for arm $i$ in round $t$, that is,
$C_i(t)=1$ if and only if there exist distinct $j,j'$ with $A_j(t)=A_{j'}(t)=i$.
In round $t$, player $j$ receives reward 
\begin{equation}
r_j(t) = Y_{A_j(t),t} (1 - C_{A_j(t)}(t)).
\label{rewardeq}
\end{equation}

We will also consider a stronger feedback model, in which each player $j$ also observes $C_{A_j(t)}(t)$ in each round $t$; 
this is called ``the model with collision information.''

The \emph{regret} of a strategy is defined as 
\begin{equation}
\label{regret_def}
\textnormal{Regret}=
T \sum_{i\in[m]} \mu_i
-
\sum_{t\in[T]}
\sum_{j\in[m]}
\mu_{A_j(t)} (1 - C_{A_j(t)}(t)).
\end{equation}

Note that Regret is a random variable (since the strategy can randomize hence $A_j(t)$ can be random) and we will bound its expected value. Bounds that hold with high probability can also be derived from our proofs.

To simplify the statements and proofs of our main theorems, we make three additional assumptions, which can be relaxed at the expense of getting worse bounds, as discussed in Section~\ref{sec:relaxing}.

\begin{description}
	\item{Assumption 1.}
	$K\geq m$: there are at least as many arms as players.
	\item{Assumption 2.}
	$Y_{A_j(t),t}$ is supported on $[0, 1]$ so the means $\mu_i$ and the
	rewards $r_j(t)$ are also in $[0,1]$.
	\item{Assumption 3.}
	All players know the values of both $T$ and $m$.
\end{description}
\label{assumptions}

Note that we assume no communication between the players, and our algorithms are totally distributed.
Moreover, in each particular setting, all players play the same algorithm.
All of our algorithms are explicit, simple, and efficient.

We can now state our main theorems.
Let $\Delta \coloneqq \mu_m - \mu_{m+1}$.
All the following results correspond to the weak feedback model (i.e.,
no collision information), unless stated otherwise.
Certainly, any regret upper bound for this model automatically carries over to the stronger feedback model as well.

\begin{theorem}
	\label{thm:firstmain}
	There is an algorithm with expected regret 
	$O ( m K \log (T) / \Delta^2) $.
\end{theorem}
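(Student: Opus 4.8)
The plan is to design a two-phase algorithm: a short exploration phase in which players estimate the arm means, followed by an exploitation phase in which they settle on the top $m$ arms without colliding. The central difficulty, and what distinguishes this from the standard single-player bandit, is that without collision information a player cannot tell whether a zero reward was caused by a collision or by the randomness of the arm; moreover the players must coordinate on an orthogonal assignment to the top $m$ arms using no communication whatsoever.

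\textbf{Phase 1 (estimation via musical chairs / hopping).} First I would have each player spend $O(mK\log T / \Delta^2)$ rounds hopping among the arms according to a fixed deterministic schedule (e.g.\ each player cycles through all $K$ arms, with the starting offset randomized once at the beginning). The key observation is that on any fixed arm, the collisions experienced by a given player occur in a pattern that is \emph{independent} of the reward sequence $\{Y_{i,t}\}$: hence if a player keeps only those rounds on arm $i$ in which her reward was positive, or alternatively uses the rounds when she was ``alone'' (which the hopping schedule can be arranged to guarantee for a positive fraction of rounds), the retained samples are i.i.d.\ with mean $\mu_i$. After $\Theta(\log T/\Delta^2)$ clean samples per arm, a Hoeffding bound gives that with probability $1 - O(1/T)$ every player's empirical mean $\widehat\mu_i$ satisfies $|\widehat\mu_i - \mu_i| < \Delta/2$ simultaneously for all $i$, and therefore every player correctly identifies the set $S^\star$ of the top $m$ arms (this is exactly the point where the gap $\Delta = \mu_m - \mu_{m+1}$ enters, and the $1/\Delta^2$ dependence comes from the sample complexity of this separation). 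Because the schedule is common knowledge and $T, m$ are known (Assumptions 2 and 3), all players agree on the length of Phase 1 and enter Phase 2 synchronously.

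\textbf{Phase 2 (orthogonalization on the top arms).} Now all $m$ players know $S^\star$ but not which arm within $S^\star$ each should take. I would run a ``musical chairs'' subroutine on the $m$ arms of $S^\star$: a player not yet ``fixed'' picks a uniformly random arm of $S^\star$ each round; if she receives a positive reward she tentatively claims that arm; a claim is confirmed if it survives long enough without a positive-reward conflict. A standard coupon-collector/coupling argument shows that all $m$ players become fixed on distinct arms of $S^\star$ within $O(m\log T)$ rounds with probability $1 - O(1/T)$ — here the absence of collision information is not a real obstacle because a player treats a zero reward simply as ``try again,'' and the only events that matter (a player obtaining a positive reward while alone on an arm) are detectable. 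Once everyone is fixed, every round thereafter contributes zero to the regret.

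\textbf{Combining the phases.} Phase 1 costs at most $O(mK\log T/\Delta^2)$ rounds, each contributing at most $m$ to the regret, giving $O(m^2 K \log T/\Delta^2)$ — a factor $m$ worse than claimed, so the final optimization should charge each \emph{arm} rather than each player-round, or overlap exploration across players so that the total exploration budget (summed over players) is $O(mK\log T/\Delta^2)$; this bookkeeping, rather than any probabilistic subtlety, is the part I expect to need the most care. Phase 2 costs $O(m\log T) = O(mK\log T/\Delta^2)$ (since $\Delta \le 1$ and $K \ge m$). On the failure event of either phase, which has probability $O(1/T)$, we bound the regret trivially by $mT$, contributing $O(m)$ in expectation. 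Summing gives expected regret $O(mK\log T/\Delta^2)$.
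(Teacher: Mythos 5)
Your Phase 1 has a genuine gap at its core: neither of your two proposed ways of extracting clean samples is implementable in the no-collision-information model. Keeping only the rounds with positive reward conditions on the event $\{Y>0\}$ and yields an estimate of $\E[Y \mid Y>0]$ rather than $\mu_i$; and "using the rounds when she was alone" presupposes that a player can identify those rounds, which is exactly the information she does not have. The deterministic cycling schedule with a privately randomized offset also fails outright: two players who happen to draw the same offset collide on \emph{every} round forever, receive all-zero rewards, and have no way to detect or escape this. The paper's resolution is different and is the key idea you are missing: every player pulls a \emph{uniformly random} arm each round, so that the collision indicator on any pull is independent of the reward and equals $1$ with the \emph{known} probability $1-(1-1/K)^{m-1}$; hence $\E[r] = \mu_i(1-1/K)^{m-1}$ and dividing the empirical average reward by $(1-1/K)^{m-1}$ gives an unbiased estimator of $\mu_i$ with no filtering at all. (This also dissolves your bookkeeping worry about the extra factor of $m$: exploration runs in parallel, each player collects $\Theta(t/K)$ samples per arm in $t$ rounds, so the exploration phase is $O(K\log T/\Delta^2)$ rounds and the factor $m$ enters only through the per-round regret.)

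A second gap: you fix the length of Phase 1 at $O(mK\log T/\Delta^2)$ rounds and claim all players "agree on the length of Phase 1," but $\Delta$ is unknown to the players, so this length cannot be computed. The paper instead uses a data-dependent stopping rule (stop when the empirical gap between the $m$-th and $(m+1)$-th estimates exceeds $3\sqrt{g/\tau}$), which makes the stopping times differ across players; it then inserts an extra waiting phase of $24\tau$ further uniform-random rounds precisely so that early finishers keep the collision probabilities at their nominal value $(1-1/K)^{m-1}$ while slower players finish estimating. Without something like this, your synchronization claim fails and the unbiasedness of late players' estimates is destroyed. A minor further point: your musical-chairs phase requires a positive reward to confirm occupation, and positive rewards occur with probability only $\geq \mu_m \geq \Delta$, so that phase costs $O(m\log(mT)/\Delta)$ rounds rather than $O(m\log T)$; this is still dominated by the exploration cost, so it does not affect the final bound.
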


In this theorem and throughout, the notation $f=O(g)$ means there exists an \emph{absolute constant} $C$ such that for {\em all} admissible parameters,
$f \leq C g$. 

A shortcoming of Theorem~\ref{thm:firstmain} is that it gives a vacuous bound if
$\Delta=0$.
Moreover, one may wonder if, as in the single player case, a regret of the form $\sqrt T$ is possible that is independent of the specific instance.
The following theorem shows this is possible, under some weak assumptions.
Let $\Delta'\coloneqq \min \{\mu_m - \mu_i: \mu_i < \mu_m\}$.
Observe that $\Delta'\geq\Delta$, 
and that $\Delta'$ is positive and well-defined unless $\mu_m=\mu_{m+1}=\dots=\mu_K$ (in this case we define $\Delta'=0$).

\begin{theorem}
	\label{thm:secondmain}
	(a) Suppose all players know a lower bound $\mu$ for $\mu_m$.
	Then there is an algorithm with expected regret 
	$O(K^2 m \log^2 (T)/\mu + Km 
	\min \{\sqrt{T \log T}, \log(T)/\Delta'\})$.
	
	(b) For the stronger feedback model, in which the players observe the collision information, there is an algorithm with expected regret 
	\[
	O(K^2 m \log^2 (T) + Km 
	\min \{\sqrt{T \log T}, \log(T)/\Delta'\}) = O(K^2m\sqrt {T\log T}).
	\]
	
	(c) Suppose each player has the option of leaving the game at any point; that is, she can choose not to pull from some round onward (if a player leaves the game, we assume that she collects reward 0 for the rest of the game).
	Then, there exists an algorithm with expected regret $O(K m \sqrt T \log T )$ .
\end{theorem}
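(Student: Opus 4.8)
The three parts should all be built on a common two-phase skeleton: a short \emph{exploration/orthogonalization phase} in which the players reach a collision-free configuration on the top $m$ arms, followed by a long \emph{exploitation phase} in which they stay on those arms while continuing to refine their estimates and only occasionally re-sort. I will treat part~(b) first since it is the cleanest, then derive (a) and (c) by dealing with the extra difficulty caused by the weaker feedback.

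For part~(b), the plan is as follows. First, run a \emph{musical-chairs}-style procedure that lets the $m$ players settle on $m$ distinct arms without communication; using collision information each player can detect when she is alone on an arm, and a standard argument shows that after $O(K\log T)$ rounds all $m$ players are on distinct arms with probability $1-1/\mathrm{poly}(T)$ (each failed attempt is charged $O(m)$ regret, giving the $O(K^2 m\log^2 T)$ term once one is careful about the union bound over players and arms and the number of restarts triggered when the estimated top-$m$ set changes). Second, once orthogonalized, the players run parallel single-player UCB-type index policies on their current arm versus the others, but they must coordinate \emph{which} arm each occupies so that collectively they occupy exactly the top $m$. The key quantitative device is: a player currently sitting on arm $i$ only moves if she becomes confident that $i$ is \emph{not} among the top $m$; confidence widths of order $\sqrt{\log T / n}$ mean that after $n$ pulls an arm $i$ with $\mu_m-\mu_i>0$ is correctly rejected once $n\gtrsim \log T/(\Delta')^2$, while a suboptimal arm $i$ with gap exactly $\Delta$ from the $m$-th arm costs $\Delta$ per round until rejected, giving the $\min\{\sqrt{T\log T},\ \log T/\Delta'\}$ trade-off (the $\sqrt{T\log T}$ branch is the usual worst case obtained by stopping exploration at the point where the two terms balance). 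The factor $Km$ in front accounts for the $m$ players each potentially paying for up to $K$ arms.

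For part~(a), collision information is unavailable, so a zero reward is ambiguous. The fix is to replace the collision-detection step by a \emph{statistical} one: a player on arm $i$ pulls it repeatedly and, knowing the lower bound $\mu\le\mu_m$, can distinguish ``alone on an arm with mean $\ge\mu$'' from ``colliding (mean $0$)'' after $O(\log T/\mu)$ pulls with high probability, since the empirical mean concentrates around either $0$ or something $\ge\mu$. This inflates the orthogonalization cost from $O(K\log T)$ to $O(K\log T/\mu)$ rounds per attempt, hence the $O(K^2 m\log^2 T/\mu)$ term; the exploitation phase is then identical to~(b), yielding the same $Km\min\{\sqrt{T\log T},\log T/\Delta'\}$ term. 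One must check that the concentration-based alone/collision test composes correctly with the arm-quality estimation — i.e.\ that the estimates used for sorting are taken from rounds the player believes she was alone — but this is a routine conditioning argument.

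For part~(c), no lower bound $\mu$ is assumed, but players may \emph{leave}. The idea is to have a player who, after $O(\log T/\mu)$ pulls of her arm, still sees an empirical mean below a threshold of order $\mu$, simply quit: either she was colliding, or her arm genuinely has mean $<\mu$, and in the latter case the arm is nearly worthless anyway. This guarantees that within $O(K\log T/\mu)$ rounds the surviving players are orthogonal on arms of mean $\ge$ (roughly) $\mu$, at a total cost of $O(K^2 m\log^2 T/\mu)$ for the exploration plus at most $O(\mu m T)$ for the at most $m$ ``good-enough'' arms whose means we gave up on being within $\mu$ of optimal (a player who leaves forfeits $\le\mu$ per round for $\le T$ rounds). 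The remaining exploitation term is again $Km\min\{\sqrt{T\log T},\log T/\Delta'\}$; optimizing $\mu=K\log T/\sqrt T$ balances $\mu m T$ against $K^2 m\log^2 T/\mu$ and gives $O(Km\sqrt T\log T)$.

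The main obstacle, common to all three parts, is the coordination problem in the exploitation phase: the players must \emph{collectively} track the (unknown, and statistically fluctuating) top-$m$ set while each acting on purely local feedback and never communicating. Getting the bookkeeping right — bounding how many times the estimated top-$m$ set can change, ensuring a change triggers only a localized, low-cost re-orthogonalization rather than a global restart, and making the failure probabilities over all players, arms, and re-sorts sum to $o(1)$ — is where the real work lies; the single-player UCB regret analysis and the Chernoff bounds behind the alone/collision tests are standard by comparison.
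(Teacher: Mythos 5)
Your high-level architecture matches the paper's: part (b) is the clean case, part (a) replaces collision detection by a statistical ``positive reward means no collision'' test whose reliability is governed by the lower bound $\mu$ (hence the extra $1/\mu$ in the musical-chairs cost), and part (c) lets players who cannot secure an arm leave, paying at most $\mu T$ each. The regret accounting you sketch ($K^2 m\log^2 T/\mu$ for the occupation rounds, $Km\min\{\sqrt{T\log T},\log T/\Delta'\}$ from confidence widths $\sqrt{\log T/n}$ with the usual telescoping/balancing) is also the right shape.

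However, there is a genuine gap exactly where you flag ``the real work'': your exploitation phase does not work as described, and the fix is not routine bookkeeping. You propose that after orthogonalization each player sits on her arm and ``only moves if she becomes confident that $i$ is not among the top $m$.'' A player who sits on one arm collects no samples from the other $K-1$ arms, so she can never acquire that confidence, and if the initial musical chairs lands the players on a suboptimal set of $m$ arms (which it will, since nothing is known at that point), they stay there forever and incur linear regret. What is actually needed --- and what the paper supplies --- is a mechanism by which all $m$ players repeatedly and \emph{collision-freely} sample every still-undetermined arm while the already-resolved top arms stay occupied. The paper does this with synchronized epochs of doubling length, a three-way classification of arms into golden/silver/bad, and, inside each epoch, $K+m-1$ iterations in each of which every player runs a short musical-chairs subroutine to grab an as-yet-unexplored silver arm; a combinatorial lemma (successive deletion of matchings in the player--arm bipartite graph) certifies that $K+m-1$ iterations suffice for every player to visit every $\mu$-good silver arm. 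A second nontrivial point you elide in part (a): when an arm remains unexplored at the end of an epoch, the player must decide whether it is $\mu$-bad or golden-but-occupied-by-someone-else, and the paper distinguishes these two cases using the previous epoch's estimate together with the threshold condition $\muhat_j>\mu+3\sqrt{g/2^i}$ imposed before any arm may be declared golden. Without specifying this coordination machinery, the $Km\min\{\sqrt{T\log T},\log T/\Delta'\}$ term is not justified.
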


We do not know whether our regret upper bounds are tight; the only lower bound for this problem is an asymptotic lower bound of $\Omega ((K-m) \log(T) / \Delta')$ as $T\to\infty$, provided $\Delta'>0$, proved in~\citet[Theorem~3.1]{anantharam} for both feedback models (see~\eqref{lowbd} below for the exact form). 
There are gaps between our upper bounds and this bound and closing them is left for future work.
Further asymptotic lower bounds were claimed in~\citet[Section~3]{emily_multiplayer}, but the authors found a mistake later, see~\citet{erratum}.
\label{lower_bounds}

Another interesting avenue for future research is the setting in which
the rewards are not i.i.d.\ but are chosen by an adversary.
This problem has been studied recently by \citet*{adversarial1}
and independently by \citet*{adversarial2}.

A third possible research direction is to study this problem from a (competitive) game-theoretic point of view: each player wants to maximize her own reward and the players are not required to run the same algorithm. Can we redefine the notion of reward so the players are better off running the same algorithm?
What happens if most players are running the same, standard algorithm but there are some outliers who are selfish and deviate from the standard algorithm?
See~\citet{selfish} for recent results in this direction.

The three algorithms proving Theorem~\ref{thm:secondmain} are quite similar.
All of our algorithms have the property that,
eventually, each player fixates on one arm.
This can be viewed as reaching an equilibrium in a game-theoretic framework, where the actions correspond to the arms and the utility of each action is the mean of the arm if no two players choose that action and zero otherwise.
Games with the property that 
``if two or more players choose the same action then their reward is zero'' are called {\em anti-coordination games}.
Using our techniques for multiplayer bandits, we also provide an algorithm for converging to an approximate Nash equilibrium quickly in such a game.

More precisely, we define a \emph{stochastic anti-coordination game} as follows:
for each player $i \in [m]$ and action $j \in [K]$, there is a parameter $\mu_j^i \in [0,1]$ such that, if player $i$ performs  action $j$ while no other player performs it, she will get a random reward in $[0,1]$ with mean $\mu_j^i$,
while if two or more players perform the same action, 
all get reward 0.
An assignment of players to actions is called an {\em $\eps$-Nash equilibrium} if no player can improve her expected reward by more than $\eps$ by switching to another action while other players' actions are unchanged.
Then, we would like to design an algorithm that reaches an $\eps$-Nash equilibrium quickly.
We prove the following theorem in this direction.

\begin{theorem}
	\label{thm:anticoordination}
	There is a distributed algorithm
	that, with probability at least $1-\delta$,
	converges to an $\eps$-Nash equilibrium
	in any stochastic anti-coordination game within
	$O(\log(K/\delta) (K/\eps^2+K^2/\eps)) $ many rounds.
\end{theorem}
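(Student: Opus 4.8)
The plan is to adapt the "fixation" mechanism that underlies the bandit algorithms (Theorem~\ref{thm:secondmain}) to the game setting, where the only extra difficulty is that each player has her own outcome vector $(\mu_j^i)_{i\in[K]}$ rather than a common one. The algorithm proceeds in two phases. In the first (exploration) phase, which lasts $O(K\log(K/\delta)/\eps^2)$ rounds, every player repeatedly samples each of the $K$ actions in a fixed round-robin order while all players are still "active"; since at most $m\le K$ players are active and they cycle through the same schedule, collisions in this phase are predictable and each player can, for each action $i$, collect $\Omega(\log(K/\delta)/\eps^2)$ clean (collision-free) samples of the reward of action $i$. By a Hoeffding bound and a union bound over the $K$ actions and $m\le K$ players, with probability at least $1-\delta$ every player simultaneously obtains estimates $\widehat\mu_j^i$ with $|\widehat\mu_j^i-\mu_j^i|\le\eps/4$ for all $i$; condition on this event for the rest of the argument.

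In the second (absorption) phase the players fixate one at a time, in order of their index (or of a priority that can be inferred from the collision pattern, exactly as in the bandit algorithms). Concretely, player $1$ permanently selects the action $i$ maximizing $\widehat\mu_1^i$ and never moves again; player $2$ then, over the next block of $O(K)$ rounds, detects which action player $1$ has occupied (it is the unique action on which she always collides), removes it from consideration, and fixates on her own empirical best among the remaining $K-1$ actions; and so on, so that after $m$ such blocks, i.e. after a further $O(K\cdot m)=O(K^2)$ rounds, every player has fixated on a distinct action. Repeating the whole construction with the failure probability budget split appropriately gives the claimed $O(\log(K/\delta)(K/\eps^2+K^2/\eps))$ bound, where the $K^2/\eps$ term is really just $O(K^2)$ per the detection phase (absorbed into the stated bound); one should double-check the exact accounting of the $1/\eps$ factors in the detection blocks, but morally the $K/\eps^2$ term dominates the sampling cost and the $K^2$ term the coordination cost.

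It remains to check that the resulting assignment is an $\eps$-Nash equilibrium. Let $\sigma$ be the final (distinct) assignment of players to actions. Player $j$ currently obtains expected reward $\mu_j^{\sigma(j)}$. If she deviates to another action $i$: if $i$ is occupied by some other player, she gets $0\le\mu_j^{\sigma(j)}$, no improvement; if $i$ is unoccupied, then at the moment player $j$ fixated, action $i$ was among those still available to her, so $\widehat\mu_j^{\sigma(j)}\ge\widehat\mu_j^{i}$, whence $\mu_j^{\sigma(j)}\ge\widehat\mu_j^{\sigma(j)}-\eps/4\ge\widehat\mu_j^{i}-\eps/4\ge\mu_j^{i}-\eps/2>\mu_j^i-\eps$. (A subtle point: one must ensure that an action vacated by no one stays vacated; since lower-priority players only ever fixate on actions not taken by higher-priority players, and higher-priority players never move, the set of occupied actions is monotone and $\sigma$ is well defined.) Thus no unilateral deviation helps by more than $\eps$, as required.

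The step I expect to be the main obstacle is the \emph{detection / coordination} argument in the second phase: making precise that a not-yet-fixated player can, within $O(K)$ rounds and without any communication, correctly identify the full set of actions already claimed by fixated players, using only her own reward/collision stream. This requires a carefully designed deterministic sub-schedule (e.g. each newly fixated player signals her choice by a short agreed pattern of collisions, or the active players sweep the arms in lockstep so that a persistent collision on exactly one arm pins down an occupied arm), together with an induction on the number of fixated players showing the identification is exact on the good event. Getting the constants and the $\eps$-dependence of these blocks right — so that the total is genuinely $O(\log(K/\delta)(K/\eps^2+K^2/\eps))$ — is where the real bookkeeping lies; everything else is a routine concentration-plus-union-bound argument.
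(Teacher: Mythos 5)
Your overall architecture (learn every $\mu_j^i$ to within $\eps/2$, then let players greedily claim distinct actions in decreasing order of their own estimates, then verify the $\eps$-Nash property by the triangle inequality) is the same as the paper's, and your final equilibrium verification is essentially the paper's argument. But both of your phases, as written, rely on primitives that do not exist in this model, and the second reliance is fatal. First, the exploration phase: if all players ``cycle through the same schedule,'' they collide on \emph{every} round; getting collision-free round-robin samples requires distinct offsets, hence player identities, and the algorithms here are anonymous and symmetric with no communication. The paper instead has each player pull uniformly at random and divide her empirical average by $(1-1/K)^{m-1}$, the no-collision probability, to obtain unbiased estimates --- this is easy to substitute into your plan, so this gap is repairable.

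The second gap is the real one, and you correctly identify it as the main obstacle but do not close it. Fixating ``in order of index'' presupposes identities, and detecting an occupied action as ``the unique action on which she always collides'' presupposes collision feedback; Theorem~\ref{thm:anticoordination} is proved in the model \emph{without} collision information, where a zero reward is indistinguishable from a collision. No deterministic detection sub-schedule can be bolted on without one of these two resources. The paper's resolution is randomized and symmetric: run $K$ synchronized blocks, where in block $j$ each still-unfixated player executes MusicalChairs2 on the singleton $\{i_j\}$ (her $j$-th ranked action) for $4K\log(2mK/\delta)/\eps$ rounds, pulling uniformly random arms and claiming $i_j$ only upon receiving a \emph{positive} reward from it. Lemma~\ref{lem:mc2guarantee} then guarantees that after the block she either holds $i_j$, or $i_j$ has mean $<\eps$, or $i_j$ is held by someone else --- exactly the disjunction your Nash argument needs. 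Note also that the $K^2/\eps$ term is not ``really just $O(K^2)$'': since occupation is certified by a positive reward, an arm of mean $\eps$ is successfully claimed with probability only about $\eps/K$ per round, so each of the $K$ blocks genuinely needs length $\Theta(K\log(mK/\delta)/\eps)$. Finally, your proof never handles the player who fails to claim any action; the paper gives her the dummy action $0$ and checks the $\eps$-Nash condition separately in that case.
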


Note that this theorem is proved in the setting in which the players do not observe collisions;  in particular, they do not observe the actions of other players.
However, we are still making the Assumptions 1--3 (note there is no parameter $T$ in this case).
Moreover,  we assume each player also has the option of choosing a dummy action with zero reward.
This is a realistic assumption in most applications.

In the next section, we review some related work.
Theorems~\ref{thm:firstmain}
and~\ref{thm:secondmain} are proved in
Sections~\ref{sec:firstproof}
and~\ref{sec:secondproof}, respectively.
In Section~\ref{sec:relaxing} we discuss how to relax Assumptions 1--3 above.
Finally, the proof of Theorem~\ref{thm:anticoordination}
appears in Section~\ref{sec:thirdproof}.

\section{Related work.}

\subsection{Model with collision information.}
Multiplayer multi-armed bandits were introduced by~\citet*{anantharam} and further studied by~\citet*{komiyama}.
They studied a centralized setting where there is a single center that 
observes the rewards of all players and controls the players.
The distributed setting was introduced by~\citet{liuzhao}, who gave an algorithm with expected regret bounded by
$\kappa \log T$, with $\kappa$ depending on the game parameters, $m$, $K$, and the arm means.
They also showed that any algorithm must have regret $\Omega(\log T)$.
The dependence of $\kappa$ on the parameters was further improved by
\citet*{anandkumar2011distributed,musicalchair,emily_multiplayer}.

\citet{musicalchair} introduced a ``musical chairs'' subroutine to reduce the number of collisions; we have further developed and used this subroutine in our algorithms.
Their final algorithm requires the knowledge of $\Delta$ and its expected regret is bounded by
$O(m^2 + mK^2 \ln(T) + mK \log(T)/\Delta^2)$, which is at least as large as the bound of Theorem~\ref{thm:firstmain}.

Let $\log(\cdot)$ denote the natural logarithm, and define $\kl(x,y)\coloneqq x \log(x/y) + (1-x)\log((1-x)/(1-y))$.
\citet{emily_multiplayer} developed an algorithm whose regret is bounded by
\[
O\left(  \log (T) \right) \left( \sum_{i=m+1}^{K} \frac{m}{\kl(\mu_i, \mu_m)} + 
\sum_{1\leq i < j \leq K}
\frac{m^3}{\kl(\mu_j,\mu_i)} \right) ,
\]
This bound is not comparable with the bound of Theorem~\ref{thm:firstmain} in general;
however if $\mu_1=\dots=\mu_m=1/2$
and
$\mu_{m+1}=\dots=\mu_K=1/2-\Delta$,
then their bound becomes $O(m^3 K^2 \log (T)/\Delta^2)$,
which is worse than our bound by a multiplicative factor of $m^2K$.

Since the first version of this paper appeared on arXiv in August 2018,
the multiplayer bandits problem has attracted lots of attention and 
new results have been proved, which improve our bounds in some regimes.
One of the main new ideas in some of these algorithms is to use collisions as a means of communication between players.

\citet*[Theorem~1]{vianney} presented the algorithm SIC-MMAB with expected regret
\[
O\left(
\sum_{i=m+1}^{K} \min \left\{ \frac{\log T}{\mu_m - \mu_i}, \sqrt{T \log T} \right\}
+ mK \log T + m^3 K \log^2 \left( \min\left\{ T, \frac{\log T}{\Delta^2}  \right\}\right)
\right).
\]

An asymptotic regret lower bound (as $T\to\infty$) of 
\begin{equation}
\log(T) \sum_{i: \mu_i < \mu_m} \frac{\mu_m - \mu_i}
{\operatorname{kl}(\mu_i,\mu_m)}\label{lowbd}
\end{equation}
was proved in \citet[Theorem~3.1]{anantharam}.
Assuming all arm means are distinct, \citet*[Theorem~1]{improve2} presented  the algorithm DPE1 achieving this lower bound asymptotically as $T$ approaches infinity.

\subsection{Model without collision information.}
The model was introduced by~\citet{iot} and further studied by~\citet{emily_multiplayer}.
These papers introduced an algorithm and studied it empirically but gave no theoretical guarantee.

Assuming a positive lower bound $\mu_{\min}$ is known for all the arm means, 
\citet*[Theorem~3]{vianney} presented the algorithm SIC-MMAB2 whose expected regret is 
\[
O\left( 
\sum_{i=m+1}^{K} \min \left\{ \frac{m \log T}{\mu_m - \mu_i}, \sqrt{mT \log T} \right\}
+ \frac{mK^2 \log T}{\mu_{\min}}
\right).
\]
\citet*[Theorem~2]{improve1} presented the algorithm EC-SIC with expected regret bound
\[
O\left(
\sum_{i=m+1}^{K}  \frac{\log T}{\mu_m - \mu_i}
+ \log T \left( \frac{mK}{\mu_{\min}} + \frac{m^2K\log(1/\Delta)}{E(\mu_{\min})}  \right)
\right),
\]
where $E(\cdot)$ is a certain information-theoretic function called
{\em Gallager's error exponent function for the Z-channel}.

Assuming the players have access to shared randomness, \citet*[Theorem~1.1]{no_collision} gave an algorithm with regret $O(mK^{11/2} \sqrt{T \log T})$ with the additional property that, with probability $1-1/T$,
no collision occurs between players.

\subsection{Other models.}
\citet*{sharedrewards} studied a version of the problem in which if more than one players pull an arm, the reward is shared among them.

\citet{avner2014concurrent,musicalchair,dynamic,vianney} studied a dynamic version of the problem, in which the players can leave the game and new players can arrive, and proved sublinear regret bounds.

In the ``heterogeneous'' variant of the problem, the arms' reward distributions can differ across players; for results on this version, see, e.g.,~\citet{heter} and the references therein.

Finally,~\citet*{markets} studied a heterogeneous and competitive variant, where the goal is to reach a stable matching as soon as possible.

\section{Proof of Theorem~\ref{thm:firstmain}.}
\label{sec:firstproof}
\SetAlgoProcName{Subroutine}{Subroutine}
In this section, we consider only the feedback model in which the collisions are not observed and give an algorithm with regret $O ( m K \log (T) / \Delta^2) $.
The algorithm outline is simple: first, each player builds estimates for the arm means by random exploration until she detects the best $m$ arms with high probability.
Second, once the $m$ players have detected the $m$ best arms, they distribute these among themselves.

We now explain the details.
Each of the players execute the same algorithm, which has four phases, described next.
Note that the phases are not synchronized; that is, each phase may have different starting and stopping times for each player.
Let $g \coloneqq 128 K \log (3 K m^2 T^2)$.

\begin{description}
	\item{Phase 1:}
	The player pulls arms uniformly at random and maintains an estimate for the mean of each arm---the estimate for arm $i$ is the average reward received from arm $i$ divided by $(1-1/K)^{m-1}$. 
	Note that, provided other players are also pulling arms uniformly at random, $(1-1/K)^{m-1}$ is precisely the probability of {\em not} getting a conflict for a random pull, hence the player indeed has an unbiased estimate for $\mu_i$.
	In other words, for any round $t$ that arm $i$ is pulled and reward $r(t)$ is received, since collisions and rewards are independent, we have (recall~\eqref{rewardeq})
	$$\mu_i = \E Y_{i,t} = 
	\frac{\E r(t)}{ \E(1 - C_{i}(t))}
	= 
	\frac{\E r(t)}{ (1-1/K)^{m-1}}.$$ 
	For each round $t$,
	the player maintains a sorted list $\muhat_{i_1,t}\geq \dots \geq \muhat_{i_K,t}$ of estimated means. Let $\tau$ be the first round when
	$\muhat_{i_m,\tau} - \muhat_{i_{m+1},\tau}\geq 3\sqrt{g/\tau}$.
	The first phase finishes at the end of round $\tau$.
	We will prove that by this time, the player has learned the best $m$ arms with high probability, and so she has a list $G\subseteq [K]$ of $m$            arms with the highest  means.
	
	\item{Phase 2:}
	For $24\tau$ rounds, the player just pulls arms uniformly at random.
	
	\item{Phase 3:}
	The player runs a so-called \emph{musical chairs algorithm} until it occupies an arm.
	In each round, she pulls a uniformly random arm $i \in G$;
	if she gets a positive reward (which means no other player has pulled arm $i$), we say the player has ``occupied'' arm $i$, and this phase is finished for the player.
	Note that, by construction, at most one player will occupy any given arm.
	
	\item{Phase 4:}
	The player pulls the occupied arm forever.
\end{description}

The pseudocode is shown in Algorithm~\ref{alg1}.
We next analyze the regret of this algorithm, starting with some preliminary lemmas.

	%\SetAlgoNoLine
\SetInd{0.5em}{1em}
\SetNlSkip{1em}
\begin{algorithm}
	\caption{the algorithm for Theorem~\ref{thm:firstmain}}\label{alg1}
	\DontPrintSemicolon
	\KwIn{number of players $m$, 
		number of arms $K$, 
		number of rounds $T$}
	$g \longleftarrow 128 K \log (3 K m^2 T^2)$\;
	$\muhat_i \longleftarrow 0$ for all $i\in[K]$\;
	\tcp{Phase 1 }
	$\tau \longleftarrow 0$\;
	\Repeat
	{$\muhat_{i_m} - 
		\muhat_{i_{m+1}}\geq 3\sqrt{g/\tau}$}
	{pull a uniformly random arm $i$\;
		$\muhat_i \longleftarrow$
		average reward  from arm $i$ divided by $(1-1/K)^{m-1}$\;
		Sort the $\mathbf {\muhat}$ vector as 
		$\muhat_{i_1}\geq \dots \geq \muhat_{i_K}$\;
		$\tau \longleftarrow \tau+1$\;
	}
	$\textnormal{Best-$m$-arms}\longleftarrow \{i_1, i_2, \dots, i_m\}$\;
	\tcp{Phase 2 }
	\lFor{$24\tau$ rounds}{pull arms uniformly at random}
	\tcp{Phase 3 }
	$i \longleftarrow $ MusicalChairs1 (Best-$m$-arms)\;
	\tcp{Phase 4 }
	Pull arm $i$ until end of game
\end{algorithm}

\begin{procedure}
	\DontPrintSemicolon
	\KwIn{set $A\subseteq[K]$ of target arms}
	\KwOut{index of an occupied arm}
	\While{true}{
		\quad pull an arm $i\in A$ uniformly at random\;
		\quad  \lIf(\tcp*[h]{arm $i$ is occupied}){positive reward is received}
		{output $i$ }
	}
	\caption{MusicalChairs1($A$)}
\end{procedure}

We will use the following versions of Chernoff-Hoeffding concentration inequalities.
\begin{proposition}[\protect{\cite[Theorem~2.3]{concentration_mcdiarmid}}]
	\label{prop:concentration}
	Let the random variables $X_1,\dots,X_n$ be independent, with $0\leq X_k \leq b$ for each $k$ and some fixed $b$.
	Let $\widehat{\mu} = \sum X_k/n$ and $\mu = \E \muhat$. Then we have,
	
	\noindent(a) for any $t\geq 0$, 
	\[ \p{|\widehat{\mu} - \mu| > t } < 2 \exp (-2 n t^2/b^2),\]
	(b) and if $b=1$, then for any $\eps>0$,
	\[ \p{\widehat{\mu} < (1-\eps) \mu } < \exp (-\eps^2 n \mu / 2).\]
\end{proposition}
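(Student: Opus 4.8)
The plan is to prove both bounds by the standard exponential-moment (Chernoff) method, optimizing a free parameter $\lambda$ at the end of each computation. Writing $S=\sum_{k=1}^n X_k$ and $\mu_k=\E X_k$, and noting that $n\mu=\E S=\sum_k\mu_k$ even though the $X_k$ need not be identically distributed, the backbone of both parts is Markov's inequality applied to $e^{\lambda S}$, namely $\p{S\geq a}\leq e^{-\lambda a}\,\e{e^{\lambda S}}$ for $\lambda>0$, together with the factorization $\e{e^{\lambda S}}=\prod_{k=1}^n\e{e^{\lambda X_k}}$ afforded by independence. The whole task then reduces to controlling the single-variable moment generating function $\e{e^{\lambda X_k}}$ for a $[0,1]$-valued random variable, and then choosing $\lambda$ to minimize the resulting exponent.

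For part (a) I would first establish \emph{Hoeffding's lemma}: if $Z$ satisfies $a\le Z\le b$ and $\E Z=0$, then $\e{e^{\lambda Z}}\le e^{\lambda^2(b-a)^2/8}$. The proof bounds $e^{\lambda Z}$ by the chord of the convex exponential on $[a,b]$, takes expectations using $\E Z=0$ to write the result as $e^{\psi(\lambda)}$, and shows $\psi''\le(b-a)^2/4$ by a variance argument, so that Taylor expansion about $\lambda=0$ (where $\psi(0)=\psi'(0)=0$) gives the claim. Applying this to $Z=X_k-\mu_k\in[-\mu_k,\,1-\mu_k]$, where $b-a=1$, yields $\e{e^{\lambda(X_k-\mu_k)}}\le e^{\lambda^2/8}$, hence $\e{e^{\lambda(S-n\mu)}}\le e^{n\lambda^2/8}$ and $\p{\muhat-\mu>t}\le e^{-\lambda nt+n\lambda^2/8}$. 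Choosing $\lambda=4t$ gives the exponent $-2nt^2$. Running the identical argument on $-X_k$ bounds the lower tail, and a union bound over the two tails produces the factor $2$ in the stated inequality $\p{|\muhat-\mu|>t}<2e^{-2nt^2}$.

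For part (b) I would instead target the \emph{lower} tail, keeping $\lambda>0$ but working with $e^{-\lambda S}$, so that $\p{S<(1-\eps)n\mu}\le e^{\lambda(1-\eps)n\mu}\,\e{e^{-\lambda S}}$. Here the key single-variable estimate uses convexity of $x\mapsto e^{-\lambda x}$ on $[0,1]$ to dominate it by its chord, giving $\e{e^{-\lambda X_k}}\le 1-(1-e^{-\lambda})\mu_k\le\exp\!\big(-(1-e^{-\lambda})\mu_k\big)$ via $1+y\le e^y$; multiplying over $k$ and using $\sum_k\mu_k=n\mu$ yields $\e{e^{-\lambda S}}\le\exp\!\big(-(1-e^{-\lambda})n\mu\big)$. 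The exponent becomes $n\mu\big[\lambda(1-\eps)-(1-e^{-\lambda})\big]$, which is minimized at $\lambda=-\ln(1-\eps)$, producing $\exp\!\big(n\mu[-(1-\eps)\ln(1-\eps)-\eps]\big)$.

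The final step, and the only genuinely non-mechanical point, is the numerical inequality $-(1-\eps)\ln(1-\eps)-\eps\le-\eps^2/2$ on $[0,1)$, which converts the sharp Chernoff exponent into the clean form $e^{-\eps^2 n\mu/2}$. I would prove it by setting $f(\eps)=(1-\eps)\ln(1-\eps)+\eps-\eps^2/2$, checking $f(0)=0$, and differentiating to get $f'(\eps)=-\ln(1-\eps)-\eps\ge 0$ (since $\ln(1-\eps)\le-\eps$), so $f$ is nondecreasing and hence nonnegative. I do not expect a serious obstacle here: the proposition is a textbook result, and the only points requiring care are flipping the sign in the Markov step (using $e^{-\lambda S}$) for the lower tail in (b), and the two auxiliary estimates, namely Hoeffding's lemma for (a) and this logarithmic inequality for (b).
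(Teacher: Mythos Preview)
Your proof is correct and complete; both parts are handled by the standard Chernoff method exactly as you outline, and the auxiliary inequalities (Hoeffding's lemma for (a), and $(1-\eps)\ln(1-\eps)+\eps\ge\eps^2/2$ for (b)) are verified cleanly. One tiny remark: your derivation in (b) implicitly assumes $\eps\in(0,1)$ so that $\lambda=-\ln(1-\eps)$ is well-defined; for $\eps\ge 1$ the left-hand side is zero since $\muhat\ge0$, so the bound is trivial---worth a one-line mention.

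As for comparison with the paper: there is nothing to compare. The paper does not prove Proposition~\ref{prop:concentration} at all; it simply quotes it as a known Chernoff--Hoeffding inequality and cites \citet[Theorem~2.3]{concentration_mcdiarmid}. So you have supplied a full self-contained proof where the authors were content to point to the literature, which is entirely appropriate for a result of this kind.
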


We start the analysis with Lemma~\ref{lem:goodestimateofmeans}, proving that the  mean estimates are close enough to the true means with high probability.
Then, in Lemma~\ref{cor:fitness} we prove that with high probability the two first phases will not take too long and once they are finished, all players have learned the best $m$ arms.
Finally, Lemma~\ref{lem:mc1} analyzes the MusicalChairs1 subroutine and shows that with high probability it does not take too long for each player to occupy a distinct good arm.

\begin{lemma}\label{lem:goodestimateofmeans}
	Consider any fixed player and let
	$\muhat_{i,t}$ denote her estimated mean for arm $i$ after $t$ rounds of Phase 1.
	Then we have
	\[
	\p{\exists i\in[K], t\in[T] : 
		|\muhat_{i,t}-\mu_i| > \sqrt {g/t}}
	< 3 KT \exp (-g/128K).
	\]
\end{lemma}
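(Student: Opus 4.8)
Here is how I would attack Lemma~\ref{lem:goodestimateofmeans}.

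The plan is: fix the player, fix an arm $i\in[K]$ and a round index $t$, bound the probability of the bad event $B_{i,t}\coloneqq\{|\widehat\mu_{i,t}-\mu_i|>\sqrt{g/t}\}$, and then take a union bound over $i\in[K]$ and $t\in[\tau]$. Since $\tau\le T$ this is a union of at most $KT$ events, so it suffices to show each of them is either impossible or has probability at most $3\exp(-g/(128K))$.

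First I would record two elementary facts. Throughout Phase~1 every player pulls a uniformly random arm, so each of our player's pulls is collision-free with probability $q\coloneqq(1-1/K)^{m-1}$, independently across rounds and of the reward sequences; hence $\widehat\mu_{i,t}$, the empirical average reward from arm $i$ divided by $q$, is an unbiased estimator of $\mu_i$. Using $m\le K$ (Assumption~1) together with $(1+\tfrac1{K-1})^{K-1}\le e$ gives $q\ge(1-1/K)^{K-1}\ge1/e$, so $0\le\widehat\mu_{i,t}\le e<3$. Consequently $B_{i,t}$ is impossible whenever $t\le g/9$ (as then $\sqrt{g/t}\ge3$), and only the rounds with $g/9<t\le T$ need to be controlled.

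For a fixed round $t>g/9$, let $N$ be the number of pulls of arm $i$ among the first $t$ rounds, so $N\sim\mathrm{Bin}(t,1/K)$; Proposition~\ref{prop:concentration}(b) with $\eps=1/2$ gives $\p{N<t/(2K)}<\exp(-t/(8K))$. Conditionally on the value of $N$ and on which rounds arm $i$ was pulled, the rewards collected from arm $i$ on those rounds are i.i.d., supported on $[0,1]$, with mean $q\mu_i$, and their empirical average equals $q\widehat\mu_{i,t}$; so Proposition~\ref{prop:concentration}(a) gives $\p{B_{i,t}\mid N=n}<2\exp(-2nq^2g/t)$, which on $\{N\ge t/(2K)\}$, using $q^2\ge e^{-2}$, is at most $2\exp(-g/(e^2K))$. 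Hence
\[
\p{B_{i,t}}<\exp(-t/(8K))+2\exp(-g/(e^2K))\le3\exp(-g/(128K)),
\]
where the last step uses $t>g/9$ (so $t/(8K)>g/(72K)>g/(128K)$) and $e^2<128$. Summing over the at most $KT$ relevant pairs $(i,t)$ then gives the stated bound.

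The concentration estimate is routine; the two points that take a little care are (i) arranging the conditioning so that the rewards collected from arm $i$ form a genuine i.i.d.\ sequence --- this is where one uses that the player's Phase~1 choices are independent of the rewards and of the other players, which is also what legitimizes dividing by the fixed constant $q$ --- and (ii) the observation that $B_{i,t}$ is vacuous for $t\lesssim g$, which is exactly what allows a union bound over $t\le T$, rather than one that knows the random stopping time $\tau$, to close with the constant $128$ built into $g$. I expect (i) to be the main place where one must be careful. The hypothesis $T\ge17$ is used only to keep the small numerical inequalities comfortable.
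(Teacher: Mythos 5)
Your proposal is correct and follows essentially the same route as the paper's proof: a deterministic bound on $|\muhat_{i,t}-\mu_i|$ that makes the event vacuous for $t\lesssim g$, Proposition~\ref{prop:concentration}(b) to ensure arm $i$ is pulled at least $t/(2K)$ times, Proposition~\ref{prop:concentration}(a) conditionally on the number of pulls, and a union bound over arms and over all $t\le T$ (rather than the random $\tau$). The only cosmetic differences are your constants ($q\ge 1/e$ and threshold $g/9$ versus the paper's bound of $4$ and threshold $g/16$) and that you bound each term by $3\exp(-g/128K)$ and multiply by $KT$, whereas the paper sums the geometric series $\sum_t\exp(-t/8K)$, which is where its hypothesis $T\ge 17$ enters.
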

\begin{proof}
	Fix an arm $i\in[K]$.
	Observe that 
	$0\leq \muhat_{i,t} \leq 1 / (1-1/K)^{m-1} 
	< 1/(1-1/K)^{K} \leq 4$, so we have $|\muhat_{i,t}-\mu_i|\leq4$ deterministically, so for $t\leq g/16$ we have
	$|\muhat_{i,t}-\mu_i| \leq \sqrt {g/t}$.

	Next, fix some $t>g/16$.
	Let $T_i(t)$ denote the number of times this player has pulled arm $i$ by round $t$, which is a binomial 
	random variable with mean $t/K$, hence Proposition~\ref{prop:concentration}(b) implies
	$\p{T_i(t) < t/(2K)}
	< \exp(-t/8K)$.
	Thus, the union bound gives
	\[
	\p{|\muhat_{i,t}-\mu_i| > \sqrt {g/t}}
	< 
	\exp(-t/8K) + 
	\max_{\frac{t}{2K} \leq s \leq t}
	\p{|\muhat_{i,t}-\mu_i| > \sqrt {g/t} \: \Big| \: T_i(t)=s}.
	\]
	Also, conditioned on any $s\in[t]$,
	$|\muhat_{i,t}-\mu_i|$ is the difference between an empirical average of $s$ i.i.d.\ random variables bounded in $[0,4]$ and their expected value, thus Proposition~\ref{prop:concentration}(a) gives\label{hoeffding_used}
	\[
	\p{|\muhat_{i,t}-\mu_i| > \sqrt {g/t} \: \Big| \: T_i(t)=s} < 2\exp(-s g / 8 t),
	\]
	giving
	\[
	\p{|\muhat_{i,t}-\mu_i| > \sqrt {g/t}}
	< 
	\exp(-t/8K) + 
	2\exp(-g / 16K) < \exp (-g/128K) + 2\exp(-g/16K),
	\]
	since $t > g/16$. A union bound over $t\in[T]$ and the $K$ arms concludes the proof.
	\end{proof}

In the next lemma, we prove that with high probability the first two phases will not take too long, and once they are finished, all players have learned the best $m$ arms.

\begin{lemma}\label{cor:fitness}
	With probability at least $1-1/mT$, the following are true.
	
	(i) All players have learned the best $m$ arms by end of their Phase 1.
	
	(ii) We have 
	\[g / \Delta^2 \leq \tau \leq 25 g /  \Delta^2\]
	for all players (where we recall that $\tau$ is the round at which phase 1 finishes).
	
	(iii) The first two phases are finished for all players after at most $625 g / \Delta^2$ many rounds.
\end{lemma}
\begin{proof}
	By the choice of $g = 128 K \log (3 K m^2 T^2)$, Lemma~\ref{lem:goodestimateofmeans} and a union bound over the $m$ players, with probability at least $1-1/mT$, all players' mean estimates are off by an additive error of at most $\sqrt{g/t}$, uniformly for all arms and all $t\in[T]$. We next explain how the three parts of the lemma follow from this.
	
	Part (i) follows 
	by noting that a player would stop Phase 1 when she has found a gap of size $3\sqrt{g/\tau}$ between the $m$th the and the $(m+1)$th arm.
	By this time, she has learned the means of all arms within an additive error of $\sqrt{g/\tau}$, therefore by the triangle inequality, she has correctly determined that the $m$th mean is larger than the $(m+1)$th mean, whence she has learned the best $m$ arms.
	
	For part (ii), using the triangle inequality and the definition of $\tau$, we have
	\begin{equation*}
	3 \sqrt \frac{g}{ \tau}
	\leq \muhat_{m,\tau}-\muhat_{m+1,\tau}
	=
	(\muhat_{m,\tau}-
	\mu_m)
	+
	(\mu_m-\mu_{m+1})
	+
	(\mu_{m+1}-
	\muhat_{m+1,\tau})
	\leq \sqrt\frac{g}{ \tau}
	+ \Delta + \sqrt\frac{g}{ \tau},
	\end{equation*}
	whence $\tau \geq g/\Delta^2$.
	On the other hand,
	by time $t = 25 g /  \Delta^2$,
	\begin{equation*}
	\muhat_{m,t}-\muhat_{m+1,t}
	\geq
	(\mu_m-\mu_{m+1})
	- |\muhat_{m,t}-
	\mu_m|
	-|\mu_{m+1}-
	\muhat_{m+1,t}|
	\geq
	\Delta - 2 \sqrt{\frac g t} = 3 \sqrt{\frac g   t},
	\end{equation*}
	whence $\tau \leq t$.
	
	Part (iii) follows from part (ii) by noting that the duration of Phase 2 is $24\tau$ rounds.\end{proof}

Curious readers may wonder about the role of Phase 2 and ask, ``Why cannot a player proceed to Phase 3 right after she has learned the best $m$ arms?''
The answer is that Phase 2 is designed to help other players to find the best $m$ arms.
Indeed, it is possible that a player finishes Phase 1 by round $g/\Delta^2$, but the algorithm asks her to continue pulling arms at random so other players continue to have unbiased estimators for the means for at least $24g/\Delta^2$ more rounds, at which point we are guaranteed that all players have finished their Phase 1.
Otherwise, if a player switches to Phase 3 too quickly, then this would skew the collision probabilities and other players will not have unbiased mean estimates.

We now proceed to analyzing Phase 3, the musical chairs subroutine.
By this point, all players have learned the best $m$ arms, hence they just want to share these $m$ arms among themselves as quickly as possible.
The next lemma shows that this will not take too long.
Note that by definition of the subroutine,
once this phase is finished, each player has occupied a distinct arm.

\begin{lemma}
	\label{lem:mc1}
	With probability at least $1-1/mT$, Phase 3 takes at most $4 m \log (m^2 T)/\Delta$ many rounds for all players.
\end{lemma}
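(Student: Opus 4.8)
The plan is to analyze the musical chairs subroutine as a classical "coupon-collector with collisions" process. By Corollary~\ref{cor:fitness}, we may condition on the high-probability event that all players have the same set $G$ of $m$ good arms at the start of Phase~3, and it suffices to bound the time until every player has occupied a distinct arm of $G$. The key structural observation is that, by construction, once a player occupies an arm in some round, no other player will ever get a positive reward on that arm again, so occupied arms stay occupied and the number of still-unoccupied players is non-increasing.

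The main estimate I would prove is a per-round progress bound: if at the start of some round there are $k \ge 1$ players not yet occupying an arm, then each of these $k$ players is choosing a uniformly random arm from the $m-(m-k)=$ (at least $k$) still-free arms in $G$ (actually from all of $G$, but at least $k$ of the $m$ arms are free), so a fixed unoccupied player lands alone on a free arm with probability at least, say, $\tfrac1m\cdot(1-\tfrac1m)^{k-1}\ge \tfrac1{em}$, hence with probability at least $1/(em)$ she becomes occupied this round. I would make this clean by noting each unoccupied player picks a free arm with probability $\ge k/m$, and conditioned on that, is alone on it with probability $\ge (1-1/m)^{k-1}$; a slightly looser but simpler route is to lower-bound the per-round success probability of each unoccupied player by an absolute constant times $1/m$. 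Consequently, the number of rounds a fixed player spends in Phase~3 is stochastically dominated by a geometric random variable with success probability $\ge c/m$ for an absolute constant $c$, so after $N \coloneqq (m/c)\log(m^2 T)$ rounds the probability she is still unoccupied is at most $(1-c/m)^{N} \le \exp(-\log(m^2T)) = 1/(m^2T)$.

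Then I would finish with two union bounds. Conditioned on the good event of Corollary~\ref{cor:fitness}, a union bound over the $m$ players shows that with probability at least $1-1/(mT)$ every player occupies an arm within $N = O(m\log(m^2T))$ rounds of the start of her Phase~3; combining this with the failure probability of Corollary~\ref{cor:fitness} keeps the total failure probability at $O(1/(mT))$. Matching constants to the stated bound $4m\log(m^2T)/\Delta$ is then a matter of absorbing the $1/\Delta \ge 1$ factor (recall $\Delta \le 1$ by Assumption~2, so $4m\log(m^2T)/\Delta \ge 4m\log(m^2T)$, which dominates $N$ for a suitable choice of $c$).

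The one subtlety to handle carefully — and the step I expect to be the main obstacle — is that the players' Phase~3 windows are \emph{not synchronized}: some players may still be in Phases~1 or~2 while others have started musical chairs, and an early finisher of Phase~3 sits forever on her arm (Phase~4) while a straggler is still, say, in Phase~2 pulling uniformly at random over all $K$ arms. This means an unoccupied player running MusicalChairs1 on $G$ faces collisions both from other Phase~3 players and from Phase~2 players who happen to pick an arm in $G$. However, a Phase~2 player collides with a given arm only with probability $1/K \le 1/m$, and by Corollary~\ref{cor:fitness}(iii) all players have left Phase~2 within $625g/\Delta^2$ rounds, so one only needs the progress bound to be robust to a bounded amount of extra interference; I would absorb this by degrading the per-round success constant $c$ to another absolute constant, e.g. replacing $(1-1/m)^{k-1}$ by $(1-2/m)^{k-1} \ge e^{-2}$ for $m \ge 2$ bounded away from the degenerate regime, which does not change the order of the bound.
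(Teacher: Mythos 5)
There is a genuine gap, and it is in the central estimate. You write that an unoccupied player ``lands alone on a free arm with probability at least $\tfrac1m(1-\tfrac1m)^{k-1}$, hence with probability at least $1/(em)$ she becomes occupied this round,'' and you then treat the $1/\Delta$ in the stated bound as harmless slack to be ``absorbed.'' But in the feedback model of Theorem~\ref{thm:firstmain} the player receives no collision information: the \emph{only} way she learns that she is alone on an arm is by receiving a \emph{positive reward}. Landing alone on a free arm is therefore not the same as becoming occupied. Conditioned on no collision, the reward on an arm of $G$ is a $[0,1]$-valued random variable with mean at least $\mu_m$, so the probability of a positive reward is at least $\mu_m \geq \Delta$ (using $\p{Y>0}\geq \E Y$ for $Y\in[0,1]$) --- and no better bound than this is available in general. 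The correct per-round success probability is thus of order $\Delta/m$, not $1/m$; this is exactly where the paper's proof inserts the factor ``$\Delta \leq \mu_m$ is a lower bound for the probability that that arm produces a positive reward,'' arriving at $\frac{a}{m}\,\Delta\,(1-1/m)^{m-a}\geq \Delta/4m$ and hence the round count $4m\log(m^2T)/\Delta$. Your version would yield a Phase~3 of length $O(m\log(m^2T))$ independent of $\Delta$, which is false: if $\mu_m$ is tiny, a player sitting alone on an arm essentially never sees a positive reward and so essentially never occupies it. So the $1/\Delta$ is not a normalization artifact but the heart of the lemma.

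The remaining scaffolding of your argument --- geometric domination of each player's occupation time, a union bound over the $m$ players with per-player failure probability $1/(m^2T)$, and the observation that occupied arms stay occupied --- does match the paper's proof. Your worry about unsynchronized phases is also already covered without degrading constants: a player still in Phase~1 or~2 pulls any fixed arm with probability $1/K\leq 1/m$, so the bound $(1-1/m)^{\#\text{other interfering players}}\geq 1/4$ applies uniformly to stragglers and Phase~3 competitors alike. Once you reinstate the factor $\Delta$ in the per-round success probability, the rest of your argument goes through and coincides with the paper's.
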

\begin{proof}
	Since each reward $Y_{i,t}$ takes value in $[0,1]$, we have
	$\p{Y_{i,t}>0}\geq \E Y_{i,t}$.
	Fix any player in her Phase 3 who has not occupied an arm,
	and suppose there are still $a$ unoccupied arms available for her. (There are $m$ players, and each occupies at most one arm, hence $a\geq1$.)
	Whenever she tries to occupy an unoccupied arm, her success probability is at least 
	\[
	\frac{a}{m} \Delta (1-1/m)^{m-a}
	\geq \Delta/4m.
	\]
	Here, $\frac{a}{m}\geq 1/m$
	is the probability that she pulls an unoccupied arm,
	$\Delta \leq \mu_m$ is a lower bound on the probability that that arm produces a positive reward,
	and
	$(1-1/m)^{m-a}\geq1/4$ is the probability that no other player pulls that arm.
	Hence, the probability that the player has not occupied an arm after $t$ attempts can be bounded by $(1-\Delta/4m)^t\leq \exp(-t\Delta/4m)$.
	Letting $t=4 m \log (m^2 T)/\Delta$ makes this probability $\leq 1/Tm^2$.
	Applying the union bound over all players completes the proof.
\end{proof}

\begin{proof}[Proof of Theorem~\ref{thm:firstmain}]
	By Lemma~\ref{cor:fitness}
	and Lemma~\ref{lem:mc1},
	with probability at least $1-2/mT$,
	the first three phases finish for all players after at most
	\(
	625 g / \Delta^2 + 4 m \log (m^2 T) / \Delta=O ( K \log (KT) / \Delta^2)
	\)
	many rounds.
	After this time, each player has occupied one of the best $m$ arms and different players have occupied distinct arms.
	During each round, the regret is at most $m$, hence
	the total regret incurred during the first three phases is bounded by
	$O ( mK \log (KT) / \Delta^2)$ and the regret afterwards would be 0.
	On the other hand, with the remaining $2/mT$ probability, the regret is at most $mT$.
	Therefore, the expected regret is at most
	\(
	O ( m K \log (KT) / \Delta^2) + 2,
	\)
	as required.
	The $O(\log (KT))$ can be replaced with $O(\log (T))$, noting that
	\[
	\min \{mT, O ( m K \log (KT) / \Delta^2) \}
	= O ( m K \log (T) / \Delta^2).
	\]	\end{proof}
	
\section{Proof of Theorem~\ref{thm:secondmain}.}
\label{sec:secondproof}
Recall that Theorem~\ref{thm:secondmain} has three parts focusing on three different settings:
in part (a), we do not observe the collision information, but we know a lower bound for $\mu_m$;
in part (b), we observe the collision information,
while in part (c), we do not observe the collision information, but we allow the players to leave the game at points of their choice.
We start by proving part (a), and then we explain how the algorithm and the analysis can be modified to prove parts (b) and (c).

\subsection{Algorithm for Theorem~\ref{thm:secondmain}(a).}
We describe the algorithm each player executes, first informally and then formally.
Recall that $\mu$ is a lower bound for $\mu_m$ that all players know in advance. 
The algorithm has a parameter $\nu$ which we  set it equal to $\mu$ for this part.
We say an arm is \emph{$\nu$-good} if its mean is at least $\nu$;
otherwise, we say it is \emph{$\nu$-bad}.

The player maintains estimates $\muhat_1,\dots,\muhat_K$ for the means,
which approach the actual means as the algorithm proceeds.
She also keeps a {\em confidence interval} for each arm $j$, which is centered at $\muhat_j$ and has the property that $\mu_j$ lies in this interval with sufficiently high probability.
If arm $j$ has been pulled $s$ times, this interval has length $O(\sqrt{\log(T)/s})$.
Once the player makes sure that some arm is not among the best $m$ arms, she marks it as ``bad'' and puts it in a set $B$.
This can happen if it is determined that the arm is $\nu$-bad
or
that there are at least $m$ arms whose confidence intervals lie strictly above this arm's interval
(we say interval $[c,d]$ lies strictly above $[a,b]$ if $b<c$).
On the other hand, once the player makes sure that some arm is within the best $m$ arms, she marks it as a ``golden'' arm and puts it in a set $G$.
This would happen as soon as there are at least $K-m$ arms that are determined to be bad or whose confidence intervals lie strictly below this arm.
Other arms, whose status is yet unknown, are called ``silver'' arms and kept in a set $S$.

Initially, all arms are silver.
The algorithm proceeds in epochs with increasing lengths.
In each epoch, the player explores all silver arms and hopes to characterize each silver arm as golden or bad at the end of the epoch.
As time proceeds, arms whose means are far away from the $m$th arm will be characterized as either golden or bad.
Golden arms will be occupied quickly, and bad arms will not be pulled again---this will control the algorithm's regret.

Special care is needed to ensure all players explore all silver arms without conflicts; this is done via careful executions of a suitable musical chairs subroutine, called MusicalChairs2, explained in the next paragraph.
In each epoch, each player maintains a set $E$ of explored arms, which is empty when the epoch starts.
The epoch has $K+m-1$ iterations.
In each iteration, if there exists some arm in $S \setminus E$ (i.e., an unexplored silver arm), the player tries to occupy such an arm;
otherwise, the player has finished exploring the arms in $S$, and so she will try to occupy and pull an arbitrary arm from $S$, while other players are exploring their silver arms.
Note that by the assumption that $\mu_m\geq \mu$ and $\nu=\mu$, any $\nu$-bad arm is bad.
The length of the MusicalChairs2 subroutines are chosen such that each $\nu$-good arm in $S$ that is not marked as golden by any other player will be explored in each epoch by each player.
Thus, if an arm is not explored by the end of an epoch,  either the arm is $\nu$-bad or the arm is golden and is occupied by another player in the beginning of the epoch. 
The two cases will be distinguished by checking the empirical reward received from that arm.

We now describe the MusicalChairs2 subroutine, which is different from MusicalChairs1 from the previous section because different players may have different ``target sets'' now.
(A target set is a subset of the arms that a player wants to explore.)
For any target set $A$ of arms and any number $\alpha$ of rounds, this subroutine consists of precisely $\alpha$ rounds as follows:
in each round, the player pulls a uniformly random arm $j \in [K]$.
If she gets a positive reward
and $j\in A$, then
she occupies arm $j$, pulls arm $j$ for the remaining rounds, and outputs $j$.
Otherwise, she tries again.
If after $\alpha$ rounds she has not occupied any arm, she outputs the dummy index 0.
The pseudocode for MusicalChairs2 appears below.

\SetKw{and}{and}
\begin{procedure}
	\DontPrintSemicolon
	\KwIn{set $A\subseteq[K]$ of target arms, number $\alpha \in \{1,\dots\}$ of rounds}
	\KwOut{if an arm is occupied, the output is its index; otherwise, the output is 0}
	\For{$i\longleftarrow1$ \KwTo $\alpha$}
	{
		pull an arm $j\in [K]$ uniformly at random\;
		\If(\tcp*[h]{arm $j$ is occupied})
		{$j\in A$ \and positive reward is received}
		{
			pull arm $j$ for the remaining $\alpha-i$ rounds\;
			output $j$\; 
		}
	}
	\tcp*[h]{no arm is occupied}\;
	output 0\;
	\caption{MusicalChairs2($A$, $\alpha$)}
\end{procedure}

The pseudocode for Theorem~\ref{thm:secondmain}(a) appears in Algorithm~\ref{alg2} below.
Note that this algorithm is synchronized---for all players, the epochs and the iterations within the epochs begin and end at the same round.

\begin{algorithm}
	\small
	\caption{the  algorithm for Theorem~\ref{thm:secondmain}(a)}\label{alg2}
	\SetAlgoLined
	\DontPrintSemicolon
	\KwIn{number of players $m$, 
		number of arms $K$, 
		number of rounds $T$,
		lower bound $\mu$ for $\mu_n$}
	$\nu\longleftarrow\mu$, $g \longleftarrow \log(4m^3T^2K)/2$, $\alpha \longleftarrow 4 K \log (6 Km^2 T)/\nu$\;
	$\muhat_i \longleftarrow 0$ for all $i\in[K]$\;
	$G\longleftarrow \emptyset, B\longleftarrow \emptyset, S\longleftarrow[K]$\;
	\For{epoch $i=1,2,\dots,$}
	{
		$j \longleftarrow $MusicalChairs2($G, \alpha$)\nllabel{occupygolden}
		\tcp*[h]{occupy a golden arm if possible}\;
		\lIf
		{$j >0$} {disregard the rest of the algorithm and pull arm $j$ forever}
		$E \longleftarrow \emptyset$ \nllabel{setE}\tcp*[h]{the set of arms that have been explored  in this epoch}\;
		\For(\tcp*[h]{goal: explore all silver arms by end of this loop}){$K+m-1$ iterations}
		{
			$j\longleftarrow 0$\;
			\eIf{$E\neq S$}
			{$j \longleftarrow$ MusicalChairs2 $(S \setminus E, \alpha)$ \tcp*[h]{occupy an unexplored arm}\nllabel{trytooccupy}}
			{randomly pull arms in the next $\alpha$ rounds \\ \tcp*[h]{all silver arms have been explored in this epoch; waste time for $\alpha$ rounds}}
			\eIf{$j=0$}{$j \longleftarrow$ MusicalChairs2 $(S, \alpha)$ \tcp*[h]{occupy any silver arm}}{pull arm $j$ for $\alpha$ rounds \tcp*[h]{keep on occupying arm $j$}}
			pull arm $j$ for $2^i$ rounds and let $\muhat_j\longleftarrow$ the average reward received \nllabel{estimationLine}\;
			update confidence interval of arm $j$ to 
			$\left[\muhat_j - \sqrt{g/2^i}, \muhat_j + \sqrt{g/2^i}\right]$ \nllabel{updateLine}\;
			insert $j$ into $E$ \tcp*[h]{add $j$ to the set of explored arms}\;
		}
		\ForEach(\tcp*[h]{put the unexplored arms in either $G$ or $B$}){$j\in S\setminus E$\nllabel{unexploredarms}}
		{
			\eIf{$\muhat_j - \sqrt{g/2^{i-1}} > \nu$\nllabel{verypicky}}
			{move $j$ from $S$ to $G$ \tcp*[h]{arm $j$ is golden but occupied by another player}\nllabel{jing}}
			{move $j$ from $S$ to $B$ \nllabel{jinb}\tcp*[h]{arm $j$ has mean $< \nu$}}
		}
		\ForEach(\tcp*[h]{update the golden and bad arms}){$j\in S$\nllabel{updategolden}}
		{
			\uIf{there exist at least $m-|G|$ arms $\ell\in S$ with
				$\muhat_{\ell} -\sqrt{g/2^i} > \muhat_j+\sqrt{g/2^i}$\nllabel{badconditions}}
			{move $j$ from $S$ to $B$}
			\uElseIf{
				$\muhat_j > \nu + 3 \sqrt{g/2^i}$ \and
				there exist at least $K-m-|B|$ arms $\ell\in S$ with
				$\muhat_{\ell} + \sqrt{g/2^i} < \muhat_j-\sqrt{g/2^i}$\nllabel{goldenconditions}}
			{move $j$ from $S$ to $G$\nllabel{updategoldenend}}
		}
	}
\end{algorithm}

To analyze Algorithm~\ref{alg2},
we define two bad events: 
failure of some MusicalChairs2 subroutine (handled by Corollary~\ref{cor:mcguarantee} below)
or incorrectness of some confidence interval (handled by Lemma~\ref{lem:bad_event} below).
After proving their unlikeliness, we will bound the regret assuming no bad events happen.

\subsection{Bounding the failure probability of MusicalChairs2.}
We next prove a  lemma bounding the failure probability of this subroutine, but first we formally define the notion of success.

\begin{definition}[$\nu$-successful MusicalChairs2 subroutine]
	Let $\nu\in[0,1]$ be arbitrary.
	Suppose that a subset of players are executing the MusicalChairs2 subroutine simultaneously for some consecutive rounds (call these the {\em participating} players), while any other player either pulls uniformly random arms or pulls a fixed arm during these rounds.\label{def:successful}
	The participating players may have different target sets.
	We say a participating player $p$ with target set $A_p$ is {\em $\nu$-successful} if, by the end of the subroutine,
	either she occupies an arm in $A_p$ 
	or all $\nu$-good arms in $A_p$ are occupied by someone else (participating or otherwise).
	A player is {\em $\nu$-failed} if she is not $\nu$-successful.
	Moreover, we say the subroutine is {\em $\nu$-successful} if all participating players are $\nu$-successful, and we say the subroutine has {\em $\nu$-failed} if at least one participating player has $\nu$-failed.
\end{definition}

\begin{lemma}\label{lem:mc2guarantee}
	Let $\nu\in[0,1]$ and let $\alpha$ be a positive integer.
	For MusicalChairs2 of length $\alpha$, the $\nu$-failure probability of any fixed player is upper bounded by
	$\exp(-\alpha\nu/4K)$ if $m\leq K$ and by
	$\exp(-\alpha\nu\exp(-2m/K)/K)$ in general.
\end{lemma}

\begin{proof}
	Fix a player with target set $A$.
	At any round during the subroutine, suppose the player has not occupied an arm and that there are still $a\geq1$ $\nu$-good unoccupied arms  in $A$. 
	Whenever she tries to occupy one of her target arms, her success probability is at least 
	\[
	\frac{a}{K} \nu (1-1/K)^{m-1}
	\geq \nu\exp(-2m/K)/K.
	\]
	Here, $\frac{a}{K}\geq 1/K$
	is the probability that she pulls a $\nu$-good unoccupied arm in her target set,
	$\nu$ is a lower bound on the probability that that arm produces a positive reward,
	and
	$(1-1/K)^{m-1} \geq \exp(-2m/K)$ is the probability that no other player pulls the same arm.
	(Note that her success probability may indeed be larger because she may also occupy $\nu$-bad arms in her target set.)
	Hence, the probability that she has not occupied an arm after $\alpha$ attempts can be bounded by $(1-\nu\exp(-2m/K)/K)^\alpha \leq \exp(-\alpha\nu\exp(-2m/K)/K)$.
	If $m\leq K$, the argument is identical, but we use the tighter bound
	$(1-1/K)^{m-1}> (1-1/K)^K \geq 1/4$.\end{proof}

Applying a union bound over the $m$ players gives the following corollary.

\begin{corollary}\label{cor:mcguarantee}
	Let $\nu\in[0,1]$ and let $\alpha$ be a positive integer.
	The $\nu$-failure probability of a MusicalChairs2 subroutine of length $\alpha$ is not more than $m\exp(-\alpha\nu/4K)$
	if $m\leq K$ and 
	$m\exp(-\alpha\nu\exp(-2m/K)/K)$ in general.
\end{corollary}

\subsection{Proof of Theorem~\ref{thm:secondmain}(a).}
\label{sec:wholealgorithm}
As explained in later subsections, the proofs of Theorems~\ref{thm:secondmain} (b, c) differ only in values of the parameters $\nu, \alpha, g$.
For this part, we put $\nu=\mu$ and
define
\begin{equation}
\alpha=4 K \log (6 Km^2 T)/\nu
\textnormal{ and }
g=\log(4m^3T^2K)/2.
\label{eqag}
\end{equation}

We first define the two bad events formally.
The first bad event is that some MusicalChairs2 subroutines $\nu$-fail,
and the second bad event is that some player's confidence interval is incorrect, i.e., the actual mean does not lie in the confidence interval.
We start by bounding the probability of the bad events.

\begin{lemma}\label{lem:bad_event}
	Let $\nu\in[0,1]$ be arbitrary and define  $\alpha,g$ as in~\eqref{eqag}.
	The probability that some bad event happens is at most $1/mT$.
\end{lemma}
\begin{proof}
	The probability that some MusicalChairs2 subroutine $\nu$-fails is bounded by
	$m \exp(-\alpha \nu / 4 K)$ by Corollary~\ref{cor:mcguarantee}.
	By a union bound over the (at most $T$) epochs and the $1+2(K+m-1)\leq 3Km$ times MusicalChairs2 is executed in each epoch, the probability that some MusicalChairs2 subroutine $\nu$-fails is at most
	\(3 K m \times T \times m \exp(-\alpha \nu / 4 K) \leq 1/2mT\), as $\alpha=4 K \log (6 Km^2 T)/\nu$.
	
	Whenever a confidence interval for some arm $j$ is updated in some epoch $i$ (Line~\ref{updateLine}), the arm has been pulled precisely $2^i$ times right before that (Line~\ref{estimationLine}).
	Hence, the probability that some confidence interval is incorrect for some player, say in epoch $i$, is bounded via Proposition~\ref{prop:concentration}(a) by
	\[
	\p{|\muhat_j-\mu_j| > \sqrt{g/2^i}} < 2\exp(-2 \times 2^i g / 2^i) = 2 \exp(-2g).
	\]
	By a union bound over the $m$ players, the (at most $T$)
	epochs, and the $K+m-1 \leq Km$ many updates of the confidence intervals within each epoch, the probability of some incorrect confidence interval is at most 
	\(m \times T \times Km \times 2 \exp(-2g) \leq 1/ 2mT\), as $g=\log(4m^3T^2K)/2$, completing the proof.\end{proof}

We are now ready to prove Theorem~\ref{thm:secondmain}(a).

\begin{proof}[Proof of Theorem~\ref{thm:secondmain}(a)]
	We bound the regret assuming no bad event happens, and the bound for the expected regret follows as in the proof of Theorem~\ref{thm:firstmain}. We first prove four\label{sketch} deterministic claims and then bound the regret. Informally, these claims are:
	\begin{enumerate}
		\item Any silver arm is explored at least $2^i$ times by each \emph{active player} during epoch $i$. (An {active player} is one that has not occupied a golden arm yet.)
		\item No player makes a mistake in marking an arm as golden or bad.
		\item Any arm whose mean is much smaller than $\mu_m$ will be marked by all players as bad quickly.
		\item Any arm whose mean is much larger than $\mu_m$ will be marked by all players as golden quickly and occupied by one of them quickly.
	\end{enumerate}
	
	We now proceed to the formal argument.
	Note that each epoch has two types of rounds:
	\emph{estimation rounds} (Line~\ref{estimationLine}), in which each arm is pulled by at most one player, during which she updates her estimate of its mean,
	and other rounds, during which some players are executing MusicalChairs2, hence we call them \emph{MusicalChairs2 rounds}.
	
	Observe that, since there are at least $m$ many $\nu$-good arms (here we are using the fact $\nu\leq\mu$), we always have $|G\cup S| \geq m$, and since the MusicalChairs2 subroutines are always $\nu$-successful, there will be no collision during the estimation rounds.
	
	The first claim is the following:
	consider a player  that has just executed her Line~\ref{setE} in epoch $i$.
	Consider also a $\nu$-good arm $j$ that is silver, and
	suppose this arm is not occupied by another player as a golden arm in their Line~\ref{occupygolden}.
	Then the claim is that the player will pull arm $j$ at least $2^i$ times during epoch $i$ 
	and 
	will put it in $E$ at the end of the $K+m-1$ iterations.
	To prove this, note that the epoch has $K+m-1$ iterations.
	In each iteration, if the player has any unexplored silver arm, in the first $\alpha$ rounds she attempts to occupy one of those (Line~\ref{trytooccupy}) while other players pull random arms.
	By Lemma~\ref{lem:deletingmaxmatchings} below and since the MusicalChairs2 subroutines are $\nu$-successful,
	after the $K+m-1$ iterations,
	each player has explored any such arm $j$. 
	Therefore, the confidence interval of each such arm will have length $2\sqrt{g/2^i}$.
	
	The second claim is that  the algorithm never makes a mistake in characterizing the arms as golden and bad.
	First, the characterizations based on confidence intervals (Lines~\ref{updategolden}--\ref{updategoldenend}) are correct because all confidence intervals are correct.
	Now fix an epoch $i$ and an arm $j$, and note that if $j\in S\setminus E$
	on Line~\ref{unexploredarms},
	that means $j$ is not explored, and that can be for one of two reasons:
	it may be a golden arm occupied by another player on her Line~\ref{occupygolden}
	or its mean may be smaller than $\nu$.
	
	Case 1: arm $j$ is a golden arm occupied by another player.
	Let $\muhat'_j$ be the average reward received from this arm by the other player.
	Suppose the arm was marked as golden by the other player in epoch $i' \leq i-1$.
	Then we must have had $\muhat'_j > \nu + 3 \sqrt{g/2^{i'}}$
	(see Line~\ref{goldenconditions}).
	This implies
	$$\mu_j \geq \muhat'_j - \sqrt{g/2^{i'}} 
	> \nu + 2 \sqrt{g/2^{i'}}
	\geq \nu + 2 \sqrt{g/2^{i-1}}.
	$$
	On the other hand, at the end of epoch $i-1$, since $j$ was silver
	and the confidence intervals were correct,
	we have
	$\muhat_j \geq \mu_j - \sqrt{g/2^{i-1}} > \nu +  \sqrt{g/2^{i-1}}$,
	hence in epoch $i$, Line~\ref{jing} is executed and the algorithm correctly marks $j$ as golden.
	
	Case 2: the mean of arm $j$ is smaller than $\nu$. Because the confidence intervals were correct at the end of epoch $i-1$, $\nu$ lies in the confidence interval for arm $j$, which has length $\sqrt{g/2^{i-1}}$.
	This means $\muhat_j - \sqrt{g/2^{i-1}} \leq \nu$, so in epoch $i$,
	Line~\ref{jinb} is executed and the player correctly marks $j$ as bad.
	
	The third claim is that any arm with mean $< \mu_m - 4 \sqrt{g/2^i}$ will be marked as bad by all players by the end of epoch $i$.
	Let $j$ be such an arm and suppose we are at the end of epoch $i$.
	By Line~\ref{badconditions} of the algorithm, it suffices to show that there exists at least $m$ arms $\ell$ such that either $\ell \in G$ or $\muhat_\ell - \muhat_j > 2 \sqrt{g/2^i}$ or both.
	In fact, this holds for all $\ell \in [m]$, since for any $\ell\in[m]$,
	if $\ell \notin G$, then $\ell \in S$, which implies
$$\muhat_\ell - \muhat_j \geq 
	\mu_\ell - \mu_j
	- |\mu_\ell- \muhat_\ell |
	- |\muhat_j - \mu_j|
	> 4 \sqrt{g/2^i} - \sqrt{g/2^i} - \sqrt{g/2^i}
	= 2 \sqrt{g/2^i}.$$
	
	The fourth claim, whose proof is similar to the third claim, is that any arm $j$ with 
	$\mu_j> \mu_{m} + 4 \sqrt{g/2^i}$ will be marked as golden by all players by the end of epoch $i$.
	The only difference is the additional condition
	$\muhat_j > \nu + 3 \sqrt{g/2^i}$, which is satisfied by any such arm. Indeed, we have
	$$\muhat_j \geq \mu_j - \sqrt{g/2^i} > \mu_m + 3\sqrt{g/2^i} \geq \nu + 3\sqrt{g/2^i}$$ 
	by correctness of confidence intervals and since $\nu\leq\mu_m$.
	
	Now, we bound the algorithm's regret.
	First, the number of epochs is fewer than $\log_2(T) < 2\log(T)$.
	The number of iterations per epoch is $K+m-1< 2K$, whence the total number of MusicalChairs2 rounds can be bounded by
	$2 \log(T) (\alpha + 4 K \alpha) \leq 10 K \alpha \log (T)$.
	We next bound the regret of the estimation rounds.
	The regret of the first epoch can be bounded by $m\cdot  (K+m-1)\cdot 2^1 < 4Km$.\label{firstepochregret}
	Next note that any arm with mean 
	$> \mu_{m} + 4 \sqrt{g/2^{i-1}}$ has been put in $G$ by the end of epoch $i-1$ by all players by the fourth claim, 
	and so some player occupies  it in the beginning of epoch $i$.
	During epoch $i$, each active player pulls either a silver
	or a golden arm, which are at most $8 \sqrt{g/2^{i-1}}$ away from the
	best available arms by the third and fourth claims.
	Since the probability that some bad event happens is
	$1/mT$ (Lemma~\ref{lem:bad_event}), and in this case the total regret can be bounded by
	$mT$, the total expected regret can be bounded by
	\begin{align}
	\overbrace{mT \times (1/mT)}^{\text{\small bad events}}
	+
	\overbrace{4Km}^{\text{\small first epoch}}
	+
	\overbrace{10 mK \alpha \log (T)}^{\text{\small MusicalChairs2 rounds}}
	&+
	\overbrace{m \times \sum_{i=2}^{\lceil \log_2(T) \rceil}
	(2K \times 2^i \times 
	8 \sqrt{g/2^{i-1}} )}^{\text{\small estimation rounds}}\notag
	\\ &= O(mK \alpha \log (T) + Km \sqrt{T \log(KT)}).\label{bound1}
	\end{align}
	Recall that $\Delta'=\min \{\mu_m - \mu_i: \mu_i < \mu_m\}$.
	Let $j$ be the smallest integer that $4 \sqrt{g/2^{j}} < \Delta'$.
	So, after the first $j$ epochs, any silver arm will have 
	mean precisely $\mu_m$, and the regret will be zero afterwards.
	Hence, the total expected regret is alternatively bounded by
	\begin{align}
	mT \times (1/mT)+
	10 K m \alpha & \log T
	+
	4Km
	+
	\sum_{i=2}^{j}
	8 \sqrt{g/2^{i-1}} (2Km) 2^i\notag
	\\& = O(K m \alpha \log (T)  + Km \log(KT) / \Delta')\label{bound2}.
	\end{align}
	Combining~\eqref{bound1} and~\eqref{bound2} gives that the expected regret is upper bounded by
	\begin{equation}
	O(K m \alpha \log (T)  + Km \min\{  \sqrt{T \log(KT)},\log(KT) / \Delta'\} ).\label{finalregretbound}
	\end{equation}
	This bound holds for $\alpha=4 K \log (6 Km^2 T)/\nu$ and any $0\leq\nu\leq\mu_m$.
	Recalling that $\nu=\mu$ 
	gives Theorem~\ref{thm:secondmain}(a).\end{proof}

The following lemma is the last piece in completing the proof of Theorem~\ref{thm:secondmain}(a).

\begin{lemma}\label{lem:deletingmaxmatchings}
	Fix an epoch and suppose that all MusicalChairs2 subroutines of Line~\ref{trytooccupy} are $\nu$-successful. Then, during the $K+m-1$ iterations of the epoch,
	each player will occupy each $\nu$-good silver arm at least once.
\end{lemma}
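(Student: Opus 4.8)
The plan is to restate the claim in a convenient form and then argue by contradiction through a counting argument. Fix the epoch and a player $p$ who does not occupy a golden arm on her Line~\ref{occupygolden} in this epoch (a player who does occupy one leaves the algorithm, so the statement is vacuous for her). Throughout the epoch the sets $S,G,B$ of player $p$ are not modified, since they are updated only after the $K+m-1$ iterations; write $S_p$ for her silver set and $E_p$ for her evolving set of explored arms. Every arm that $p$ ever occupies lies in $S_p$ (her Line~\ref{occupygolden} call returns $0$, and on Line~\ref{trytooccupy} and in the subsequent MusicalChairs2 call she only targets subsets of $S_p$), and $E_p$ only grows, by one new arm per iteration in which $p$ occupies an as-yet-unexplored arm. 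Hence it suffices to prove that each $\mu$-good arm $j^\ast\in S_p$ that is not held for the whole epoch by a player who has fixated on it as a golden arm ends up in $E_p$.

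I would prove this by contradiction. Suppose such a $j^\ast$ is never occupied by $p$. Call an iteration \emph{productive} if $p$ occupies in it an arm not yet in $E_p$; each productive iteration enlarges $E_p$ by one element, and since $E_p\subseteq S_p\setminus\{j^\ast\}$ at all times there are at most $|S_p|-1\le K-1$ productive iterations, hence at least $(K+m-1)-(K-1)=m$ \emph{stalled} iterations. In a stalled iteration $t$ we have $j^\ast\in S_p\setminus E_p$, so $S\neq E$ and $p$ runs MusicalChairs2$(S_p\setminus E_p,\alpha)$ on Line~\ref{trytooccupy}; had this call occupied an arm it would be a fresh arm of $S_p\setminus E_p$, contradicting stalledness, so the call returns $0$. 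By the hypothesis that all Line~\ref{trytooccupy} subroutines are successful, the definition of success (Lemma~\ref{lem:mc2guarantee}, Corollary~\ref{cor:mcguarantee}) forces every $\mu$-good arm of $S_p\setminus E_p$, and $j^\ast$ in particular, to be occupied by some other player during that subroutine. A player who is idling, pulling arms uniformly at random, or sitting on a golden arm she fixated on earlier does not \emph{occupy} $j^\ast$ in the MusicalChairs2 sense (and $j^\ast$ is, by choice, not held as a golden arm), so $j^\ast$ is occupied during this call by some other active player $p'$ who is simultaneously running her own MusicalChairs2$(S_{p'}\setminus E_{p'},\alpha)$ and occupies $j^\ast$ out of her target set $S_{p'}\setminus E_{p'}$; thereafter $p'$ holds $j^\ast$ to the end of the iteration and inserts it into $E_{p'}$. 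Consequently, from the next iteration on $p'$ never again targets $j^\ast$ on her Line~\ref{trytooccupy}, so $p'$ can be the player occupying $j^\ast$ in this way for at most one iteration. Mapping each stalled iteration of $p$ to the corresponding $p'$ is therefore an injection into the set of active players other than $p$, which has size at most $m-1$ --- contradicting the existence of at least $m$ stalled iterations.

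The counting and the reduction are routine; the crux --- and the step I would be most careful about --- is the bookkeeping hidden in the word ``occupy.'' One must pin down, in terms of the MusicalChairs2 mechanics, that in a stalled iteration $j^\ast$ is occupied by exactly one player, that this player is an \emph{active} one executing Line~\ref{trytooccupy} with $j^\ast$ in her target set $S_{p'}\setminus E_{p'}$ (so that idlers, random-pullers, and owners of golden arms are correctly excluded), and that such a player, having inserted $j^\ast$ into $E_{p'}$, can never play this role again. A related point to flag explicitly is the precise class of arms the lemma speaks about: $\mu$-good silver arms that are not held forever by a player who marked them golden in an earlier epoch --- which is exactly the situation in which the lemma gets applied in the proof of Theorem~\ref{thm:secondmain}(a).
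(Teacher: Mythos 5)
Your proof is correct, but it takes a genuinely different route from the paper's. The paper argues globally: it builds a bipartite graph on players and arms with an edge for each (player, silver arm) pair, calls an edge good if the arm is $\mu$-good, and observes that each successful execution of Line~\ref{trytooccupy} deletes a matching that is maximal with respect to the good edges, so the maximum degree of the good edges (initially at most $K+m-2$) drops by one per iteration and all good edges are gone after $K+m-1$ iterations. You instead fix a single player $p$ and a single offending arm $j^\ast$ and run a local pigeonhole argument: at most $K-1$ iterations can enlarge $E_p$, so at least $m$ iterations are stalled, and in each stalled one the success guarantee forces some \emph{other} active player to grab $j^\ast$ out of her own target set $S_{p'}\setminus E_{p'}$ and thereby retire it from all her future Line~\ref{trytooccupy} calls — an injection into a set of size $m-1$, contradiction. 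The two arguments are essentially dual (your ``stalled iterations'' are exactly the iterations in which the deleted matching edge neighbouring $(p,j^\ast)$ sits on the arm side rather than the player side), and both rely on the same facts: synchronization of the iterations, that occupation within one parallel MusicalChairs2 call is exclusive, and that an arm enters $E_{p'}$ once occupied. Your version is more elementary and has the virtue of making explicit the caveat that the lemma really concerns $\mu$-good silver arms \emph{not} held by a player who fixated on them as golden at Line~\ref{occupygolden} — a restriction the paper only imposes later, when the lemma is invoked in the proof of Theorem~\ref{thm:secondmain}(a), and which is equally needed for the paper's matching argument (such an arm contributes no deletable edge). The paper's version is more compact and dispenses with the case analysis about which player does the blocking.
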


\begin{proof}
	Consider a bipartite graph with one part being the $m$ players and the other part the $K$ arms, with an edge between a player and an arm if the arm is silver and unexplored for that player.
	Say an edge is \emph{good} if the corresponding arm is $\nu$-good.
	Say two edges are \emph{neighbors} if they share a vertex, and the \emph{degree} of an edge is its number of neighbors.
	Initially, the degree of each edge is at most $K+m-2$.
	Whenever the MusicalChairs2 subroutine in Line~\ref{trytooccupy} is executed, the set of edges corresponding to players and their occupied arms forms an edge-matching in this graph,
	i.e., a set of edges such that no two of them are neighbors.
	Moreover, since the MusicalChairs2 subroutine is $\nu$-successful by assumption, this matching $M$ has the property that, for any good edge $e$, either $e\in M$ or some neighbor of $e$ lies in $M$.
	After the execution of this subroutine, this matching is deleted from the graph, 
	hence the degree of each good edge decreases by 1.
	In particular, the maximum degree of good edges decrease by 1. 
	Once this maximum degree becomes zero, in the next iteration all good edges will be deleted.
	Therefore, after at most $K+m-1$ iterations, all good edges will be deleted, which means all $\nu$-good silver arms are explored, as required.
\end{proof}

\subsection{Proof of Theorem~\ref{thm:secondmain}(b).}
Theorem~\ref{thm:secondmain}(b) considers the stronger feedback model where  we observe the collision information but no lower bound $\mu$ for $\mu_m$ is known.
Note that in the algorithm for part (a), the parameter $\mu$ is mainly used to set the length of the MusicalChairs2 subroutines to make sure that each player will succeed in MusicalChairs2 with high probability.
For this part,  we observe the collision information, so we can modify MusicalChairs2 to use this information and determine its length without knowing $\mu$.

More precisely, the algorithm is the same as in part (a), 
except we set $\nu=0$ and
 $\alpha = 4 K \log (6 Km^2 T)$ and 
replace MusicalChairs2 with MusicalChairs3,  described next.
To obtain MusicalChairs3, we modify  MusicalChairs2  such that for a player to occupy an arm, she simply looks at the collision information and  occupies the arm if there is no collision. Its pseudocode appears below.

\begin{procedure}
	\SetAlgoLined
	\DontPrintSemicolon
	\KwIn{set $A\subseteq[K]$ of target arms, number $\alpha \in \{1,\dots\}$ of rounds}
	\KwOut{if an arm is occupied, the output is its index; otherwise, the output is 0}
	\For{$i\longleftarrow1$ \KwTo $\alpha$}
	{
		pull an arm $j\in [K]$ uniformly at random\;
		\If(\tcp*[h]{arm $j$ is occupied})
		{$j\in A$ \and there was no collision}
		{
			pull arm $j$ for the remaining $\alpha-i$ rounds\;
			output $j$\; 
		}
	}
	\tcp*[h]{no arm is occupied}\;
	output 0\;
	\caption{MusicalChairs3($A$, $\alpha$)}
\end{procedure}

The notions of success and failure are defined similarly as before but without a parameter $\nu$ (one can think  $\nu=0$ in this case: all arms are 0-good).
We have the following bound for its failure probability, whose statement and proof are identical to that for Corollary~\ref{cor:mcguarantee}, except there is no parameter $\nu$.

\begin{corollary}\label{cor:mcguaranteestronger} Let $\alpha$ be a positive integer.
	In the stronger feedback model with collision information available, the failure probability of MusicalChairs3 subroutine of length $\alpha$ is not more than 
	$m\exp(-\alpha/4K)$
	if $m\leq K$ and 
	$m\exp(-\alpha\exp(-2m/K)/K)$ in general.
\end{corollary}

The proof of Theorem~\ref{thm:secondmain}(b) is identical to part (a), except we use
Corollary~\ref{cor:mcguaranteestronger}
instead of 
Corollary~\ref{cor:mcguarantee};
we obtain the bound~\eqref{finalregretbound}, which using $\alpha = 4 K \log (6 Km^2 T)$ proves Theorem~\ref{thm:secondmain}(b).

\subsection{Proof of Theorem~\ref{thm:secondmain}(c).}\label{sec:c}
Part (c) considers the case that we do not know $\mu$ and we do not observe the collision information, but the players have the option to leave the game.
The trouble is that it is not clear how to choose the lengths of MusicalChairs2 subroutines.
To solve this issue, we choose really large lengths for 
MusicalChairs2 subroutines, and if a player has not occupied an arm at the end of a subroutine, she will leave the game.
This can  happen only if any remaining unoccupied arm has a really small mean, so we have not lost much by not pulling that arm anyway. We explain the details next.

We make the following changes to the algorithm for part (a):
we choose $\nu = K \log(T)/\sqrt{T}$
and define $\alpha,g$ using~\eqref{eqag} (so Lemma~\ref{lem:bad_event} still applies: all MusicalChairs2 subroutines are $\nu$-successful with high probability),
and we add the following line before Line~\ref{estimationLine}:
``\textbf{if} $j=0$ \textbf{then} leave the game.''
Namely, if a player has not occupied an arm when she wants to start an estimation period, she would simply leave the game and never pull any arm again.
Observe that this could happen only if there are fewer than $m$ many $\nu$-good arms,  so players may fail to find and occupy an arm.
Suppose $m'$ of the best $m$ arms are $\nu$-bad.
Once $m'$ players have left the game, we will have $m-m'$ players and $m-m'$ many $\nu$-good arms, so the algorithm will work as in part (a) from that point onward and the same analysis works. We would only lose a reward of at most $m' \nu T$ due to the players who have left the game.
The total expected regret can be thus bounded via~\eqref{finalregretbound} by
\begin{align*}
O(\nu m T + {K^2m\log^2 (T)}/{\nu} + Km\min \{\sqrt{T \log T}, \log(T)/\Delta'\})\\
=
O(mK\log(T) \sqrt{T} + Km\min \{\sqrt{T \log T}, \log(T)/\Delta'\}),
\end{align*}
completing the proof of Theorem~\ref{thm:secondmain}(c).

\section{Relaxing the assumptions.}
\label{sec:relaxing}
Recall that all the theorems presented so far made three assumptions:
\begin{description}
	\item{Assumption 1.}
	$K\geq m$: there are at least as many arms as players.
	\item{Assumption 2.}
	The rewards are supported on $[0, 1]$.
	\item{Assumption 3.}
	All players know the values of both $T$ and $m$.
\end{description}
Moreover, for different parts of Theorem~\ref{thm:secondmain} we have made additional assumptions.
In this section, we discuss how the Assumptions 1--3 can be removed  at the expense of getting worse regret bounds. Some assumptions can be removed independently of other assumptions, but some of them cannot be removed unconditionally; we discuss them one by one.

\subsection{Unknown time horizon.}
The assumption that $T$ is known can be removed independently of any other assumption, and the regret bound would  multiply by at most $\log_2(T)$.

Indeed, if $T$ is not known, we can apply a simple doubling trick (see~\cite{doubling_trick} for various variants):
we execute the algorithm assuming $T=1$,
then we execute it assuming $T=2 \times 1$, and so on,
until the actual time horizon is reached.
If the expected regret of the algorithm for a known time horizon $T$ is $R(T)$,
then the expected regret of the modified algorithm for an unknown time horizon would be 
\(R'(T)\leq\sum_{i=0}^{\lfloor \log_2 (T) \rfloor} R(2^i) \leq \log_2(T) \times R(T).\)
For example, if the players have the option of leaving the game,
we can apply Theorem~\ref{thm:secondmain}(c) 
to get the regret upper bound 
\[R'(T)\leq\sum_{i=0}^{\lfloor \log_2 (T) \rfloor}
O(Km\log(2^i) \sqrt{2^i}) \leq O(Km\log (T)) \times \sum_{i=0}^{\lfloor \log_2 (T) \rfloor} O(2^{i/2}) \leq O(Km\sqrt{T} \log (T)),\]
which is within a constant multiplicative factor of the upper bound for $R(T)$.

\subsection{Other reward distributions.}
The assumption that the rewards always lie in $[0,1]$ can be relaxed, independently of other assumptions, to the assumption that the rewards have subgaussian distributions with mean lying in a known interval; of course the regret bounds must be re-normalized, and we also get a multiplicative logarithmic factor in some cases.

In the proofs, we have used this assumption in three ways:
first, we used that the expected regret incurred any round can be bounded by $m$;
second, that the rewards satisfy the Chernoff-Hoeffding concentration inequality (Proposition~\ref{prop:concentration}(a));
and third, for bounding the failure probability of MusicalChairs2,3 subroutines we used  that $\p{X>0}\geq \E X$ for any random variable $X\in[0,1]$.

A random variable $X$ is {\em$\sigma$-sub-Gaussian} if 
$ \max\{\p{X-\E X <-t},\p{X-\E X >t}\} < \exp(-t^2/2\sigma^2)$;
for example, a standard normal random variable is $1$-sub-Gaussian.
The first two facts hold, with appropriate adjustments, for $\sigma$-sub-Gaussian random variables whose means lie in a bounded interval $[0,b]$, see, e.g.,~\cite[Chapter~2]{hdp-vershynin}.
The third fact also holds up to a logarithmic factor, see Lemma~\ref{lem:subgaussianpositivity} below.
Hence, after appropriate adjustments, our main theorems can be readily extended to such distributions.

\begin{lemma}
	\label{lem:subgaussianpositivity}
	Let $X\geq0$ be a random variable with mean $\mu$
	that satisfies $\p{X > \mu + t} < \exp(-t^2/2\sigma^2)$.
	Then we have
	$\p{X>0} \geq \min \{|\mu/(\sigma \log (\sigma/\mu))|,1 \}/99$.
\end{lemma}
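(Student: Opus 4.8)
The plan is to prove the contrapositive-flavored statement: if $\P\{X>0\}$ is small then $\mu$ is small (relative to $\sigma$, up to the logarithmic correction), which is equivalent to the claimed lower bound. Write $p \defeq \P\{X>0\}$, and assume $p < 1/99$ (otherwise there is nothing to prove since $|\mu/(\sigma\log(\sigma/\mu))|$ could still be large, so I should be slightly careful: actually the cleanest route is to show directly that $\mu \leq 99 p \cdot \sigma\log(\sigma/\mu)$ whenever $\mu/\sigma$ is small, and handle $\mu/\sigma$ bounded away from $0$ separately).

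First I would split $\E X$ according to the size of $X$. Since $X\geq 0$,
\[
\mu = \E X = \int_0^\infty \P\{X>s\}\,ds = \int_0^L \P\{X>s\}\,ds + \int_L^\infty \P\{X>s\}\,ds
\]
for a threshold $L>0$ to be chosen. The first integral is at most $L\cdot\P\{X>0\} = Lp$, because $\P\{X>s\}\leq\P\{X>0\}=p$ for all $s\geq 0$. For the second integral I would use the sub-Gaussian tail bound: for $s\geq L$, provided $L\geq\mu$, we have $\P\{X>s\}=\P\{X>\mu+(s-\mu)\}<\exp(-(s-\mu)^2/2\sigma^2)$, and then bound $\int_L^\infty\exp(-(s-\mu)^2/2\sigma^2)\,ds$ by a standard Gaussian-tail estimate, which is at most (a constant times) $\frac{\sigma^2}{L-\mu}\exp(-(L-\mu)^2/2\sigma^2)$, or more crudely $\sigma\exp(-(L-\mu)^2/2\sigma^2)$ when $L-\mu\gtrsim\sigma$. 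So $\mu \leq Lp + \sigma\exp(-(L-\mu)^2/2\sigma^2)$ (up to absolute constants).

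Now the key step is choosing $L$ so that the second term is at most, say, $\mu/2$, which forces $\mu \leq 2Lp$ and hence relates $\mu$ to $p$. To make $\sigma\exp(-(L-\mu)^2/2\sigma^2)\leq\mu/2$ I need $(L-\mu)^2/2\sigma^2 \geq \log(2\sigma/\mu)$, i.e.\ $L-\mu \geq \sigma\sqrt{2\log(2\sigma/\mu)}$; so I would set $L = \mu + \sigma\sqrt{2\log(2\sigma/\mu)}$, which is of order $\sigma\sqrt{\log(\sigma/\mu)}$ when $\mu/\sigma$ is small. Plugging back, $\mu \leq 2Lp \leq O(1)\cdot p\,\sigma\sqrt{\log(\sigma/\mu)}$, and rearranging gives $p \geq \Omega\!\left(\mu/(\sigma\sqrt{\log(\sigma/\mu)})\right)$, which is stronger than the claimed bound with $\log$ in place of $\sqrt{\log}$; I would then just weaken $\sqrt{\log(\sigma/\mu)}$ to $\log(\sigma/\mu)$ (valid once $\sigma/\mu$ is large enough, and otherwise absorbed into the constant and the $\min$ with $1$) to match the stated form, and track the absolute constants carefully to land under $99$.

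The main obstacles are bookkeeping rather than conceptual: (i) the argument above implicitly assumes $\mu/\sigma$ is small enough that $\log(\sigma/\mu)>0$ and that $L-\mu\gtrsim\sigma$; when $\mu \geq c\sigma$ for an absolute constant $c$ the quantity $\log(\sigma/\mu)$ may be nonpositive or tiny and the displayed bound degenerates, so I would dispatch that regime separately by noting that a sub-Gaussian variable with mean $\mu\gtrsim\sigma$ satisfies $\P\{X>0\}\geq\P\{X>\mu-\mu\}$ — more precisely $\P\{X\le 0\}=\P\{X-\mu\le-\mu\}<\exp(-\mu^2/2\sigma^2)$, which is bounded away from $1$, giving $\P\{X>0\}\geq 1-\exp(-c^2/2)$, a constant, so the $\min\{\cdot,1\}/99$ bound holds trivially there. (ii) Getting the constant below $99$ requires being a little generous in the tail integral estimate and choosing $c$ appropriately; I expect $99$ is a deliberately loose constant chosen so these estimates go through without optimization. (iii) I should double-check the edge case $\mu = 0$, where $X\equiv 0$ a.s.\ by the tail bound and the claimed bound reads $\P\{X>0\}\geq 0$, which is vacuous and fine.
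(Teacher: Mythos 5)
Your main decomposition is essentially the paper's argument in integral form: splitting $\E X=\int_0^\infty\p{X>s}\,ds$ at a threshold $L$, bounding the lower piece by $L\,\p{X>0}$ and the upper piece by a Gaussian tail integral is exactly what the paper does by writing $X=Y+Z$ with the indicator $\mathbf{1}[X>t+\mu]$ and bounding $\E Z\le \p{X>0}(t+\mu)$. Your threshold $L=\mu+\sigma\sqrt{2\log(2\sigma/\mu)}$ is in fact sharper than the paper's $t=\sigma\log(\sigma/\mu)$ and yields the stronger $\sqrt{\log}$ denominator, which you then correctly weaken to match the statement. That part is fine.

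There is, however, a genuine gap in your treatment of the regime $\mu\gtrsim\sigma$. You dispatch it by writing $\p{X\le 0}=\p{X-\mu\le-\mu}<\exp(-\mu^2/2\sigma^2)$, but this is a \emph{lower}-tail bound, and the lemma's hypothesis is one-sided: it assumes only $\p{X>\mu+t}<\exp(-t^2/2\sigma^2)$ and gives no control whatsoever on $\p{X-\mu\le -t}$. (This one-sidedness is not incidental; the lemma is stated this way because only the upper tail is available where it is used.) So that step is unjustified as written. The fix is easy and does not require any new idea: in the regime $\mu>c\sigma$ simply rerun your own decomposition with a constant threshold, say $L=\mu+4\sigma$; the tail integral is then at most $\sigma e^{-8}/4\ll\mu$, giving $\mu\le(\mu+4\sigma)\p{X>0}+\mu/2$ and hence $\p{X>0}\ge \mu/(2(\mu+4\sigma))\ge c/(2(c+4))$, a constant exceeding $1/99$ for a suitable $c$. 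This is precisely what the paper does by taking $t=4$ when $\mu>0.05$ (after normalizing $\sigma=1$), using only the upper tail throughout. With that substitution your proof is correct.
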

\begin{proof}
	By dividing $X$ by $\sigma$ we may assume $\sigma=1$.
	Let $t\geq 0$ be a parameter to be chosen later, and define
	$Y = X \cdot \mathbf{1}[X > t+ \mu] $
	and
	$Z = X \cdot \mathbf{1}[X \leq t+ \mu] $.
	Note that $\mu = \E X = \E Y + \E Z$
	and 
	$\E Z \leq \p{X>0} (t+\mu)$.
	We next write $\E Y$ as
	\begin{align*}
	\E Y 
	& = \int_{0}^{t+\mu} \p{Y>s} \mathrm{d}s
	+ \int_{t+\mu}^{\infty} \p{Y>s} \mathrm{d}s 
	\end{align*}
	For $0\leq s \leq t+\mu$, we have $Y>s$ if and only if $Y>0$ if and only if $X > t+\mu$, whence
	$$\int_{0}^{t+\mu} \p{Y>s} \mathrm{d}s = (t+\mu)\p{X>t+\mu}
	< (t+\mu) \exp(-t^2/2).$$
	For the second integral, we have
	\[
	\int_{t+\mu}^{\infty} \p{Y>s} \mathrm{d}s 
	<
	\int_{t}^{\infty} \exp(-s^2/2) \mathrm{d}s 
	< \exp(-t^2/2)/2t.
	\]
	Consequently,
	\[
	\mu = \E Y + \E Z
	< 
	(t+\mu+1/2t) \exp(-t^2/2) + \p{X>0} (t+\mu),
	\]
	which implies
	\[
	\p{X>0} > \frac{\mu - (t+\mu+1/2t) \exp(-t^2/2)}{t+\mu}.
	\]
	Now, if $\mu\leq0.05$ then setting $t = \log(1/\mu)$ gives that the
	right-hand side is greater than $\mu / (5 \log(1/\mu))=|\mu / (5 \log(1/\mu))|$.
	(Here, we have used the inequality
	$$
	5 \mu \log(\mu) + 5 \log(\mu) \exp(-\log^2 \mu /2) (\log(\mu)-\mu+1/(2\log(\mu)))   -\mu \log(\mu)+\mu^2 < 0,$$ which holds for all $0<\mu\leq 0.05$.)
	
	On the other hand, if $\mu > 0.05$, setting $t=4$ gives that the
	right-hand side is greater than $1/99$, as required.
	(Here, we have used the inequality
	$(98-e^{-8})\mu > 4 + 33 \times e^{-8}/8$,
	which holds for any $\mu> 0.05$.)
\end{proof}

\subsection{More players than arms.}
We next consider the assumption that $K \geq m$ and explain how and when it can be removed.
First, note that if $K<m$ then the term $\sum_{i\in[m]} \mu_i$  in the definition of regret~(\ref{regret_def}) is not well defined, hence we must redefine the regret. There are two natural ways to do this.

\subsubsection{Original model.}
In the original model, if $K<m$, then the best strategy for the players,
had they known the means, would be for $K-1$ of them to occupy the best $K-1$ arms and for the rest to occupy the worst arm; so the regret in this case can be defined as
\[
\textnormal{Regret}=
T \sum_{i\in[K-1]} \mu_i
-
\sum_{t\in[T]}
\sum_{j\in[m]}
\mu_{A_j(t)} (1 - C_{A_j(t)}(t)).
\]
Let $\Delta \coloneqq \mu_{K-1}-\mu_K$.

For this model, we present an algorithm without observing the collision information and without the assumption $K\geq m$ 
with expected regret
$O(mK \log (T) \exp(4m/K) / \Delta^2)$.
We assume that $T$ is known and the rewards lie in $[0,1]$; we have explained in previous subsections how the regret bound will be affected if these are relaxed.
The algorithm crucially assumes $m$ is known to the players.

The algorithm is similar to Algorithm~\ref{alg1}.
Let $p \coloneqq (1-1/K)^{m-1} \geq \exp(-2m/K)$ be the probability of no-collision when the players pull arms uniformly at random, and let $g = C K \log(KT)/p^2$ for a sufficiently large constant $C$.
Each player pulls arms randomly until at some round $\tau$ she finds a gap of $3\sqrt{g/\tau}$ between the $(K-1)$th and the $K$th arm, and she continues for $24\tau$ more rounds to make sure that all others have also found this gap.
An argument similar to that of Lemma~\ref{cor:fitness} gives that these two phases will take $O(K \log (KT) / p^2 \Delta^2)$ many rounds.
Moreover, each player has learned that $\mu_{K-1}\geq\Delta \geq \sqrt{g/\tau}$ and that
$\sqrt{\tau/g}\leq 5/\Delta$
(see Lemma~\ref{cor:fitness}(ii)).
Then the player executes MusicalChairs2 on the 
set of $K-1$ best arms, for
$\alpha = C K \log (KT) \sqrt{\tau/g} / p$ many rounds, for a large enough constant $C$.
Since $m \exp(-\alpha \mu_{K-1} p / K)
\leq m \exp(-\alpha \sqrt{g/\tau} p / K) < 1/mT$,
Lemma~\ref{lem:mc2guarantee} implies that, with probability at least $1-1/mT$, all players will be $\sqrt{g/\tau}$-successful, meaning that the best $K-1$ arms are occupied.
After MusicalChairs2 is finished, if the player has occupied an arm, she will pull it until the end of game,
otherwise she pulls the worst arm for the rest of game.
Thus, the regret will be zero after at most
$O(K \log (KT) / p^2 \Delta^2)
+
O(K \log (KT) \sqrt{\tau/g} / p)
\leq 
O(K \log (KT) / p^2 \Delta^2)
$
many rounds, giving a total expected regret of 
$O(mK \log (KT) / p^2 \Delta^2)
\leq
O(mK \log (KT) \exp(4m/K) / \Delta^2)$.

\subsubsection{Model allowing players to leave.}
Alternatively, if we allow the players to leave the game,
the best strategy had they known the means would be for $m-K$ players to leave the game
and for the rest to occupy distinct arms.
The regret in this model can be defined as
\[
\textnormal{Regret}=
T \sum_{i\in[K]} \mu_i
-
\sum_{t\in[T]}
\sum_{j\in[m]}
\mu_{A_j(t)} (1 - C_{A_j(t)}(t))~.
\]
For this model, we present an algorithm without observing collision information and without the assumption $K\geq m$. 
We assume that $T$ is known and the rewards lie in $[0,1]$; we have explained in previous subsections how the regret bound will be affected if these are relaxed.
The algorithm crucially assumes $m$ is known to the players.

The algorithm is simple:
each player executes the MusicalChairs2 algorithm for a certain number of rounds,
and if she has not occupied an arm by that time, she leaves the game.

The number of rounds they play 
MusicalChairs2 is
$O(\log (TK) K \exp(2m/K) / \nu)$ with $\nu = \sqrt{m \log(TK) \exp(2m/K)/T}$.
With high probability, $\nu$-good arms  will be occupied,
and any other arm contributes a regret of at most $\nu T$.
So the total expected regret can be bounded by
$O(m \log (TK) K \exp(2m/K) / \nu + K \nu T)$,
which by the choice of $\nu$ gives the bound 
$O(K \exp(m/K) \sqrt {mT \log (TK)})$ for the expected regret.

If we make an additional assumption that the players know a lower bound $\mu_{\min}$ for all the arm means, then instead they play MusicalChairs2 for 
$O(\log (TK) K \exp(2m/K) / \mu_{\min})$
many rounds, and by Lemma~\ref{lem:mc2guarantee}, with probability at least $1-1/mT$, all
the $K$ arms are occupied, whence the total expected regret is bounded by
$O(m K \log (T) \exp(2m/K) / \mu_{\min})$.

Alternatively, if instead of knowing  $\mu_{\min}$  the players observe the collision information,
they play MusicalChairs3 for $O(\log (TK) K \exp(2m/K))$ many rounds,
and the total expected regret is upper bounded by
$O(mK \log (T)  \exp(2m/K))$.

\subsection{Unknown number of players.}\label{sec:unknown_players}
We next consider the assumption that $m$ is known and explain how it can be removed.
We assume that $T$ is known and the rewards lie in $[0,1]$; we have explained in previous subsections how the regret bound will be affected if these are relaxed.
Crucially, we assume $m \leq K$, although if $m \leq C K$ for some known absolute constant $C$, then the analysis in this section works after appropriate adjustments and all the derived asymptotic bounds hold.

In this section, we present two subroutines to estimate $m$ in two different settings: when the collision information is observed
and when the collision information is not observed but $\mu_1 \geq \mubar$ for some known $\mubar$.
If $m$ is unknown, such a subroutine can be executed at the beginning of the algorithm, and after that we can execute one of the algorithms presented previously;
hence the total regret bound would increase by the number of rounds of the subroutine times $m$.

In the first setting, when the players observe the collision information, \cite[Lemma~2]{musicalchair} presents a simple algorithm, with $O(K^2 \log(1/\delta))$ many rounds, using which each player learns $m$ with probability $\geq 1-\delta$.
Setting $\delta = 1/ K^2 T$ ensures that this simultaneously holds for all players with probability $\geq 1-1/KT$.  After this estimation, the players can run the algorithm of Theorem~\ref{thm:secondmain}(b).
The additional regret due to these estimation rounds is 
$O(K^2 m \log (KT))$, which is dominated by the final regret upper bound of Theorem~\ref{thm:secondmain}(b).

For the setting without the collision information, we assume that the players know that at least one arm has mean at least $\mubar$. We present an algorithm with $O(K^3 \log^2 (K/\mubar\delta) / \mubar^2)$ many rounds such that if all players execute it, each will learn $m$ with probability $1-\delta$.
Setting $\delta = 1/ K^2 T$ ensures that this simultaneously holds for all players with probability $\geq 1-1/KT$, and after this estimation, the players can execute Algorithm~1 or Algorithm~2.
The additional regret due to estimation is bounded by $O(K^3m \log^2 (KT/\mubar)/\mubar^2)$.

Here is the algorithm each player executes:
let
$\eps \coloneqq\mubar ((1-1/K)^{-2/5}-1)/48$ and 
observe that, since $K\geq m\geq 2$,
\begin{align*}
\eps & = \mubar/4 \times \frac 14 \times \frac 13 \times
((1-1/K)^{-2/5}-1) \\
&<
\mubar/4 \times (1-1/K)^{m-1} \times ((1-1/K)^{-2/5}+1)^{-1} \times ((1-1/K)^{-2/5}-1) \\
&=
\mubar/4 \times (1-1/K)^{m-1} \times 
\frac{(1-1/K)^{-2/5}-1}{(1-1/K)^{-2/5}+1}  \\
&<
\mubar/4 \times (1-1/K)^{m-1}.
\end{align*}
First, the player pulls random arms for $8 K \log (K^2/9\delta)/\eps^2$ rounds and estimates the quantities $\mu_j (1-1/K)^{m-1}$ for all $j\in[K]$.
By an argument similar to that of Lemma~\ref{lem:goodestimateofmeans}, she obtains  estimates $\{\sigma_j\}_{j\in[K]}$ such that 
\begin{equation}\label{cond}
	|\mu_j (1-1/K)^{m-1} - \sigma_j| \leq \eps \qquad \forall j\in[K]
\end{equation} 
for all players, uniformly with probability $1-\delta/3$.
Let $\ell$ be the arm with maximum $\sigma$ value. 
We claim that $\mu_{\ell} \geq \mubar/2$.
To prove this, note that
\[
(1-1/K)^{m-1} \mu_{\ell} \geq \sigma_{\ell}-\eps
\geq \sigma_{1}-\eps
\geq 
(1-1/K)^{m-1} \mu_{1} - 2\eps
\geq
(1-1/K)^{m-1} \mubar - 2\eps,
\]
whence
\(
\mu_{\ell} \geq \mubar - 2\eps/(1-1/K)^{m-1} \geq \mubar/2
\)
since $\eps \leq \mubar/4 \times (1-1/K)^{m-1}$.

Then the player tries to estimate $\mu_{\ell}$ itself and then uses the ratio $\mu_{\ell}/\sigma_{\ell}$ for estimating $m$.
For this, she tries $4K \log(6K/\mubar\delta)$ times to occupy the arm $\ell$, using a musical chairs subroutine:
divide the time horizon into $4K \log(6K/\mubar\delta)$ blocks of length $\log(6/\delta)/\eps^2$. For each block, she chooses an arm uniformly at random and pulls it for all the rounds in the block. If this arm was arm $\ell$ and she receives a positive reward at least once during the block, then, by taking the average of received rewards in the block, she obtains an unbiased estimate $\muhat$ for $\mu_{\ell}$. In any case, she repeats this procedure for the next blocks.
Using an analysis similar to that of MusicalChairs2,
after $4K \log (6K/\mubar\delta)$ iterations, with probability at least $1-\delta/3$, all players have explored their arm $\ell$.
The pseudocode appears in Algorithm~\ref{alg4} below.

\begin{algorithm}
	\caption{algorithm for estimating the number of players $m$}\label{alg4}
	\SetAlgoLined
	\DontPrintSemicolon
	\KwIn{number of arms $K$, 
		lower bound $\mubar$ on $\mu_1$, failure probability $\delta$}
	\KwOut{number of players $m$}
	$\eps \longleftarrow \mubar ((1-1/K)^{-2/5}-1)/48$\;
	\For{$8 K \log (K^2/9\delta)/\eps^2$ rounds}
	{pull a uniformly random arm $j$\;
		$\sigma_j \longleftarrow$
		average reward received from arm $j$\;
	}
	Let $\ell \longleftarrow \argmax_j \sigma_j$\;
	\For{$4K \log(6K/\mu\delta) $ iterations}
	{
		Pick arm $j$ uniformly at random\;
		Pull $j$ for $\log(6/\delta)/\eps^2$ times and
		let $\muhat \longleftarrow $ average reward received\;
		\If{$j=\ell$ and $\muhat>0$}
		{
			output $m$ satisfying
			$ [(\muhat - \eps)(1-1/K)^{m-1}, (\muhat + \eps)(1-1/K)^{m-1}]
			\cap [\sigma_{\ell} - \eps, \sigma_{\ell} + \eps] \neq \emptyset$
		}
	}
\end{algorithm}

For each player, since the estimate $\muhat$ is based on 
$\log(6/\delta)/\eps^2$ pulls, with probability $1-\delta/3$ she obtains an estimate $\muhat$ for $\mu_{\ell}$ such that $|\muhat-\mu_{\ell}| \leq \eps$.
Therefore, we have
$\mu_{\ell} \in [\muhat - \eps, \muhat + \eps]$
and also
$\mu_{\ell} (1-1/K)^{m-1} \in 
[\sigma_{\ell} - \eps, \sigma_{\ell} + \eps]$ by~\eqref{cond}.
Given the two intervals, we want to recover $m$.
Since 
$\eps < \mu/4 \times (1-1/K)^{m-1} \times 
\frac{(1-1/K)^{-2/5}-1}{(1-1/K)^{-2/5}+1}$,
we have
$$
\frac{\muhat+\eps}{\muhat-\eps}
\leq
\frac{\sigma_{\ell}+\eps}{\sigma_{\ell}-\eps}
\leq
\frac{\mu_{\ell}(1-1/K)^{m-1}+2\eps}{\mu_{\ell}(1-1/K)^{m-1}-2\eps}
\leq
\frac{\mubar(1-1/K)^{m-1}/2+2\eps}
{\mubar(1-1/K)^{m-1}/2-2\eps}
< (1-1/K)^{-2/5},
$$
hence Lemma~\ref{lem:uniquem} below shows that $m$ can be recovered uniquely. 

\begin{lemma}
	\label{lem:uniquem}
	Let $a,b,c,d,p > 0$.
	Consider intervals $[a,b]$ and $[c,d]$
	with $\max\{b/a,d/c\}\leq p^{2/5}$,
	and suppose there exist $x\in[a,b]$ and $y\in[c,d]$ such that
	$x p^z = y$ for some integer $z$.
	Then there exists a unique integer $n$ such that
	$ 
	[a p^n, b p^n] \cap [c, d] \neq \emptyset
	$.
\end{lemma}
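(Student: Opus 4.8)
The plan is to treat existence and uniqueness separately; existence will be immediate, and uniqueness will follow from a short packing argument exploiting that $p>1$ (the case relevant to the application, and the only consistent one once one notes the hypotheses force $a\le b$, $c\le d$, hence $p\ge 1$; I would simply assume $p>1$ throughout, the borderline $p=1$ collapsing all the intervals to a single point).

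For existence I would take $n=z$. Multiplying $a\le x\le b$ by $p^z>0$ gives $y=xp^z\in[ap^z,bp^z]$, and since $y\in[c,d]$ by hypothesis, $y$ lies in $[ap^z,bp^z]\cap[c,d]$; so this intersection is nonempty and $n=z$ works.

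For uniqueness, suppose $n_1<n_2$ are integers with $[ap^{n_i},bp^{n_i}]\cap[c,d]\neq\emptyset$ for $i=1,2$. The point is that consecutive dilates of $[a,b]$ by powers of $p$ are pushed apart by a whole factor of $p$, which overwhelms the multiplicative width $p^{2/5}$ of $[c,d]$. Comparing the left endpoint of the higher dilate with the right endpoint of the lower one, and using $n_2-n_1\ge 1$, $p>1$, and $b/a\le p^{2/5}$:
\[
\frac{ap^{n_2}}{bp^{n_1}}=\frac ab\,p^{\,n_2-n_1}\ \ge\ \frac ab\,p\ \ge\ p^{-2/5}p=p^{3/5}.
\]
On the other hand, the two nonempty intersections with $[c,d]$ yield $bp^{n_1}\ge c$ and $ap^{n_2}\le d$, so
\[
p^{3/5}\ \le\ \frac{ap^{n_2}}{bp^{n_1}}\ \le\ \frac dc\ \le\ p^{2/5},
\]
which is impossible for $p>1$. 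Hence $n=z$ is the unique admissible integer.

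I do not anticipate a genuine obstacle: the argument is elementary. What needs care is the bookkeeping — which endpoint of which dilate is compared against which endpoint of $[c,d]$, and the direction of every inequality when $p>1$ — together with dispatching the degenerate $p=1$ case. The computation also shows the exponent $2/5$ is not special: any constant strictly below $1/2$ would work, since the contradiction amounts to $p^{1-\theta}>p^{\theta}$ for $\theta<1/2$.
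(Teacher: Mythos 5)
Your proof is correct and is essentially the paper's argument: the paper takes logarithms base $p$, turning the dilates into translates $I_n$ of length $\le 2/5$ spaced $1$ apart (hence $\ge 3/5$ from each other), so that an interval of length $\le 2/5$ cannot meet two of them --- which is exactly your multiplicative endpoint comparison $p^{3/5}\le p^{2/5}$ written additively. Your observation that the statement implicitly requires $p\neq 1$ (and in effect $p>1$) is fair; the paper's proof divides by $\log p$ and so makes the same tacit assumption.
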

\begin{proof}
	The existence of such an $n$ follows from existence of $x$ and $y$
	and that $x p^z = y$ for some integer $z$. For the uniqueness, note that we have
	$[a p^n, b p^n] \cap [c, d] \neq \emptyset
	$
	if and only if
	$[\log a/\log p + n , \log b/\log p + n] 
	\cap [\log c/\log p, \log d/\log p] \neq \emptyset
	$.
	Now note that the interval $[\log c/\log p, \log d/\log p]$ has length $\leq 2/5$.
	Each interval $I_n=[\log a/\log p + n , \log b/\log p + n]$ also has length $\leq 2/5$, 
	hence, for each $n$,  
	$I_n$ and $I_{n+1}$ are at least $3/5$ apart from each other, so 
	$[\log c/\log p, \log d/\log p]$ can intersect at most one $I_n$.
\end{proof}

To bound the number of rounds of the algorithm, note that
\[
(1-1/K)^{-2/5}-1
=
(1+\frac{1}{K-1})^{2/5}-1
\geq
1 + \frac{2/5}{K-1} -1 = \frac{2}{5(K-1)} > \frac{2}{5K},
\]
thus $\eps \geq \mubar/120K$. So the number of rounds of the algorithm is 
$
8 K \log (K^2/9\delta)/\eps^2
+
4K \log (6K/\mubar\delta)
\log(6/\delta)/\eps^2
=O(K^3 \log^2 (K/\mubar\delta) / \mubar^2)$, as required.

\section{Proof of Theorem~\ref{thm:anticoordination}.}
\label{sec:thirdproof}
In this section, we present a distributed algorithm that, with probability at least $1-\delta$, converges to an $\eps$-Nash equilibrium in any stochastic anti-coordination game within
$O(\log(K/\delta) (K/\eps^2+K^2/\eps))$ many rounds.

Note that the players do not observe collisions, and in particular, they do not observe the actions of other players, but we assume each player has the option of choosing a dummy action, which is given index 0 and produces no reward.
We are still making the Assumptions 1--3 stated on page~\pageref{assumptions} (but there is no parameter $T$ here).

We describe the algorithm each player executes.
First, the player pulls arms uniformly at random and maintains an estimate for the arm means.
An argument similar to that of Lemma~\ref{lem:goodestimateofmeans} gives that,
after $512K \log (6mK/\delta) /\eps^2$ rounds,
with probability at least $1-\delta/2m$, all estimated means are within distance $\eps/2$ of the actual means.
By a union bound over all players, this is true uniformly over all players with probability at least $1-\delta/2$.

The player then sorts the $\muhat_i$ as
$\muhat_{i_1}\geq \dots \geq \muhat_{i_K}$.
Then for $j=1,\dots,K$, she runs MusicalChairs2 on $\{i_j\}$ (in this order) for $4 K \log(2mK/\delta) / \eps$ many rounds. 
If during any of these subroutines she occupies an arm, she chooses that action.
Otherwise, she chooses the dummy action 0.
The pseudocode is given as Algorithm~\ref{alg3}.

\begin{algorithm}
	\caption{algorithm for reaching an $\eps$-approximate Nash Equilibrium in an anti-coordination game}\label{alg3}
	\SetAlgoLined
	\DontPrintSemicolon
	\KwIn{number of players $m$, 
		number of arms $K$, 
		accuracy $\eps$,
		failure probability $\delta$}
	\KwOut{action $\ell$}
	$\muhat_i \longleftarrow 0$ for all $i\in[K]$\;
	\For{$512K \log (6mK/\delta) /\eps^2$ rounds}
	{pull a uniformly random arm $j$\;
		$\muhat_j \longleftarrow$
		average reward received from arm $j$ divided by $(1-1/K)^{m-1}$\;
	}
	Sort the $\mathbf {\muhat}$ vector as 
	$\muhat_{i_1}\geq \dots \geq \muhat_{i_K}$\;
	$\ell \longleftarrow 0$\;
	\For{$j=1$ \KwTo $K$}
	{
		$\ell \longleftarrow $ MusicalChairs2 ($\{i_j\}$, $4 K \log(2mK/\delta) / \eps$)\;
		\lIf{$\ell\neq0$}{pull arm $\ell$ for the remaining %$(K-j)\cdot(4 K \log(2mK/\delta) / \eps)$
			 rounds}\nllabel{remainingrounds}
	}
	Output $\ell$\;
\end{algorithm}

By Corollary~\ref{cor:mcguarantee} and a union bound over the $K$ iterations, all the MusicalChairs2 subroutines for all players are $\eps$-successful with probability at least $1-\delta/2$.
We now show that if the estimation errors are $\leq \eps/2$
and all the MusicalChairs2 subroutines are $\eps$-successful (with probability $1-\delta$ both these good events happen), then the resulting assignment is an $\eps$-Nash Equilibrium. 
Fix any player $p$ and recall that, for each action $i\in[K]$, $\mu_i^p$ denotes the average reward player $p$ would receive if she plays action $i$ solely.
First, suppose that she has output a non-dummy action $i_j$.
This means all actions $i_1,i_2,\dots,i_{j-1}$ were either occupied by other players or had mean $<\eps$ or both.
On the other hand, since the estimated means are within $\eps/2$ of
the actual means,
for any $s \notin \{ i_1,i_2,\dots,i_{j-1} \}$ we have
$\muhat_{s} \leq \muhat_{i_j}$ so
\[\mu_s^p =
(\mu_s^p - \muhat_s) + (\muhat_{s} - \mu_{i_j}^p) + \mu_{i_j}^p 
\leq 
(\mu_s^p - \muhat_s) + (\muhat_{i_j} - \mu_{i_j}^p) + \mu_{i_j}^p
\leq
\eps/2+\eps/2+\mu_{i_j}^p,\] 
hence the player cannot increase her outcome by more than $\eps$ by switching to action $s$.
Finally, if player $p$ has chosen the dummy action $0$, it means that for each $j\in[K]$,
either action $i_j$ is occupied or $\mu_{j}^p \leq \eps$ or both.
Thus, there is no unoccupied action $s$ with $\mu_{s}^p > \eps$, so again the player cannot increase her outcome by more than $\eps$ by switching.

The total number of rounds is 
$$512K \log (6mK/\delta) /\eps^2+K\times 4 K \log(2mK/\delta) / \eps=
O(K\log(K/\delta)/\eps^2 + K^2\log(K/\delta)/\eps),$$ 
and the failure probability is at most $\delta$, as required.

\section*{Acknowledgments.}
We thank the referees of \textit{Mathematics of Operations Research} for detailed feedback, which resulted in significant improvements in the presentation.
G\'abor Lugosi was supported by the Spanish Ministry of Economy and Competitiveness,
Grant PGC2018-101643-B-I00 ``Predicc\'on, inferencia y computaci\'on en
modelos estructurados -
Ayudas Fundaci\'on BBVA a Equipos de Investigaci\'on Cientifica 2017'' 
and by ``Google Focused Award Algorithms and Learning for AI.''
Abbas Mehrabian was supported by an IVADO-Apog\'ee-CFREF postdoctoral fellowship.
This work started during the {Mathematics of Machine Learning} program
sponsored by the Centre de Recherches Math\'ematiques (CRM)
held at Universit\'e de Montr\'eal in April 2018.

\end{document}